\newcommand{\R}{\mathbb{R}}
\newcommand{\E}{\mathbb{E}}
\newcommand{\ra}{\rightarrow}
\newcommand{\bbP}{\mathbb{P}}
\newcommand{\tv}{\mathrm{tv}}
\newcommand{\obeta}{\overline{\beta}}
\newcommand{\wt}{\widetilde}
\newcommand{\politex}{{\textsc{Politex}} }
\title{Improved Regret Bound and Experience Replay in Regularized Policy Iteration}
\author[1]{Nevena Lazi\'{c} \thanks{nevena@google.com}}
\author[1]{Dong Yin \thanks{dongyin@google.com}}
\author[1]{Yasin Abbasi-Yadkori \thanks{yadkori@google.com}}
\author[1,2]{Csaba Szepesv\'{a}ri \thanks{szepi@google.com}}
\affil[1]{DeepMind}
\affil[2]{University of Alberta}
\begin{document}
\maketitle

\begin{abstract}
In this work, we study algorithms for learning in infinite-horizon undiscounted Markov decision processes (MDPs) with function approximation.
We first show that the regret analysis of the \textsc{Politex} algorithm (a version of regularized policy iteration) can be sharpened from $O(T^{3/4})$ to $O(\sqrt{T})$ under nearly identical assumptions, and instantiate the bound with linear function approximation.  Our result provides the first high-probability $O(\sqrt{T})$ regret bound for a computationally efficient algorithm in this setting. The exact implementation of \politex with neural network function approximation is inefficient in terms of memory and computation.
Since our analysis suggests that we need to approximate the average of the action-value functions of past policies well, we propose a simple efficient implementation where we train a single Q-function on a replay buffer with past data. We show that this often leads to superior performance over other implementation choices, especially in terms of wall-clock time. Our work also provides a novel theoretical justification for using experience replay within policy iteration algorithms.
\end{abstract}

\section{Introduction}
Model-free reinforcement learning (RL) algorithms combined with powerful function approximation have achieved impressive performance in a variety of application domains over the last decade. Unfortunately, the theoretical understanding of such methods is still quite limited. 
In this work, we study single-trajectory learning in infinite-horizon undiscounted Markov decision processes (MDPs), also known as average-reward MDPs, which capture tasks such as routing and the control of physical systems. 

One line of works with performance guarantees for the average-reward setting follows the ``online MDP'' approach proposed by \citet{even2009online}, where the agent selects policies by running an online learning algorithm in each state, typically mirror descent. The resulting algorithm is a version of approximate policy iteration (API), which alternates between (1) estimating the state-action value function (or Q-function) of the current policy and (2) setting the next policy to be optimal w.r.t. the sum of all previous Q-functions plus a regularizer. Note that, by contrast, standard API sets the next policy only based on the most recent Q-function. 
The policy update can also be written as maximizing the most recent Q-function minus KL-divergence to the previous policy, which is somewhat similar to recently popular versions of API \citep{schulman2015trust,schulman2017proximal,achiam2017constrained,abdolmaleki2018maximum,vmpo}.

The original work of \citet{even2009online} studied this scheme with known dynamics, tabular representation, and adversarial reward functions. More recent works \citep{politex,hao2020provably,wei2020learning} have adapted the approach to the case of unknown dynamics, stochastic rewards, and value function approximation.  With linear value functions, the \politex algorithm of \citet{politex} achieves $O(T^{3/4})$ high-probability regret in ergodic MDPs, and the results only scale in the number of features rather than states. \citet{wei2020learning} later show an $O(\sqrt{T})$ bound on \emph{expected} regret for a  similar algorithm named MDP-EXP2.  In this work, we revisit the analysis of \politex and show that it can be sharpened to $O(\sqrt{T})$ under nearly identical assumptions, resulting in the first $O(\sqrt{T})$ \emph{high-probability} regret bound for a computationally efficient algorithm in this setting.

In addition to improved analysis, our work also addresses practical implementation of \politex with neural networks. The policies produced by \politex in each iteration require access to the sum of all previous Q-function estimates. With neural network function approximation, exact implementation requires us to keep all past networks in memory and evaluate them at each step, which is inefficient in terms of memory and computation. Some practical implementation choices include subsampling Q-functions and/or optimizing a KL-divergence regularized objective w.r.t. a parametric policy at each iteration. We propose an alternative approach, where we approximate the average of all past Q-functions by training a single network on a replay buffer with past data. We demonstrate that this choice often outperforms other approximate implementations, especially in terms of run-time. When available memory is constrained, we propose to subsample transitions using the notion of coresets \citep{bachem2017practical}.

Our work also provides a novel perspective on the benefits of experience replay.
Experience replay is a standard tool for stabilizing learning in modern deep RL, and typically used 
in \emph{off-policy} methods like Deep Q-Networks \citep{mnih2013playing}, as well as ``value gradient'' methods such as DDPG \citep{lillicrap2016continuous} and SVG \citep{heess2015learning}. 
A different line of \emph{on-policy} methods typically does not rely on experience replay; instead, learning is stabilized by constraining consecutive policies to be close in terms of KL divergence  \citep{schulman2015trust,vmpo,degrave2019quinoa}. We observe that both experience replay and KL-divergence regularization can be viewed as approximate implementations of \textsc{Politex}.
Thus, we provide a theoretical justification for using experience replay in API, as an approximate implementation of online learning in each state. Note that this online-learning view  differs from the commonly used justifications for experience replay, namely that it ``breaks temporal correlations'' \citep{schaul2015prioritized, mnih2013playing}. Our analysis also suggests a new objective for subsampling or priority-sampling transitions in the replay buffer, which differs priority-sampling objectives of previous work \citep{schaul2015prioritized}.

In summary, our main contributions are (1) an improved analysis of \textsc{Politex}, showing an $O(\sqrt{T})$ regret bound under the same assumptions as the original work, and (2) an efficient implementation that also offers a new perspective on the benefits of experience replay.

\section{Setting and Notation}

We consider learning in infinite-horizon undiscounted ergodic MDPs $(\cX, \cA, r, P)$, where $\cX$ is the state space,  $\cA$ is a finite action space, $r:\cX\times \cA\to[0, 1]$ is an unknown reward function, and $P:\cX\times \cA \to \Delta_{\cX}$ is the unknown probability transition function. A policy $\pi:\cX\to\Delta_{\cA}$ is a mapping  from a state to a distribution over actions. Let $\{(x_t^{\pi}, a_t^{\pi})\}_{t=1}^{\infty}$
denote the state-action sequence obtained by following
policy $\pi$. The expected average reward of policy $\pi$ is defined as 
\begin{equation}\label{eq:def_jpi}
   J_{\pi}:=\lim_{T\to\infty}\mathbb E\left[\frac{1}{T}\sum_{t=1}^T r(x_t^{\pi}, a_t^{\pi})\right]. 
\end{equation}
Let $\mu_\pi$ denote the stationary state distribution of a policy $\pi$, satisfying $\mu_\pi = \E_{x\sim \mu_{\pi}, a \sim \pi}[P(\cdot|x, a)]$. We will sometimes write $\mu_\pi$ as a vector, and use $\nu_\pi = \mu_\pi \otimes \pi$ to denote the stationary state-action distribution. In ergodic MDPs, $J_\pi$ and $\mu_\pi$ are well-defined and independent of the initial state.
The optimal policy $\pi_*$ is a policy that maximizes the expected average reward.
We will denote by $J_*$ and $\mu_*$ the expected average reward and stationary state distribution of $\pi_*$, respectively.

The value function of a policy $\pi$ is defined as:
\begin{equation}\label{eq:def_v}
V_{\pi}(x) := \mathbb E\left[\sum_{t=1}^{\infty} (r(x_t^{\pi}, a_t^{\pi}) - J_{\pi})|x_1^{\pi} = x\right].
\end{equation}
The state-action value function $Q_{\pi}(x,a)$ is defined as
\begin{equation}\label{eq:def_q}
    Q_{\pi}(x,a) := r(x,a) - J_\pi + \sum_{x'}P(x'|x, a) V_\pi(x').
\end{equation}
Notice that
\begin{equation}\label{eq:v_and_q}
    V_{\pi}(x) = \sum_{a}\pi(a|x)Q_{\pi}(x, a).
\end{equation}
Equations~\eqref{eq:def_q} and~\eqref{eq:v_and_q} are known as the Bellman equation. If we do not use the definition of $V_{\pi}(x)$ in Eq.~\eqref{eq:def_v} and instead solve for $V_{\pi}(x)$ and $Q_{\pi}(x, a)$ using the Bellman equation, we can see that the solutions are unique up to an additive constant. Therefore, in the following, we may use $V_{\pi}(x)$ and $Q_{\pi}(x, a)$ to denote the same function up to a constant. We will use the shorthand notation $Q_\pi(x, \pi') = \E_{a \sim \pi'(\cdot|x)}[Q_{\pi}(x, a)]$; note that $Q_\pi(x, \pi) = V_\pi(x)$.

The agent interacts with the environment as follows:  at each round $t$, the agent observes a state $x_t\in\cX$, chooses an action $a_t \sim \pi_t(\cdot | x_t)$, and receives
a reward $r_t:= r(x_t,a_t)$. The environment then transitions to the next
state $x_{t+1}$ with probability $ P(x_{t+1}|x_t,a_t)$.  Recall that $\pi_*$ is the optimal policy and $J_*$ is its expected average reward. The regret of an algorithm with respect to this fixed policy is
defined as
\begin{equation}\label{def:regret}
    R_T := \sum_{t=1}^T \Big(J_{*} - r(x_t, a_t)\Big).
\end{equation}
The learning goal is to find an algorithm that minimizes the long-term regret $R_T$. 

Our analysis will require the following assumption on the mixing rate of policies.
\begin{assumption}[Uniform mixing]
\label{ass:mixing}
Let $H_\pi$ be the state-action transition matrix of a policy $\pi$. Let $\gamma(H_\pi)$ be the corresponding \emph{ergodicity coefficient}~\citep{seneta1979coefficients}, defined as
\[
\gamma(H_{\pi}) := \max_{z: z^\top {\bf 1}=0, \norm{z}_1=1} \norm{z^\top H_{\pi}}_1.
\]
We assume that there exists a scalar $\gamma < 1$ such that for any policy $\pi$, $\gamma(H_\pi) \leq \gamma < 1$.
\end{assumption}
Assumption~\ref{ass:mixing} implies that for any pair of distributions $\nu, \nu'$, $\norm{(\nu - \nu')^\top H_{\pi}}_1 \leq  \gamma \norm{\nu - \nu'}_1 $; see Lemma~\ref{lem:contraction} in Appendix~\ref{app:prelim} for a proof.

\section{Related Work}

\textbf{Regret bounds for average-reward MDPs.}
Most no-regret algorithms for infinite-horizon undiscounted MDPs are only applicable to tabular representations and model-based
\citep{bartlett09regal,jaksch2010near,ouyang2017learning,
fruit2018efficient,jian2019exploration,talebi2018variance}. 
For weakly-communicating MDPs with diameter $D$, these algorithms nearly achieve the minimax lower bound $\Omega(\sqrt{D|\cX||\cA|T})$ \citep{jaksch2010near} with high probability. 
\citet{wei2020model} provide model-free algorithms with regret bounds in the tabular setting.
In the model-free setting with function approximation, 
the \politex algorithm \citep{politex} achieve $O(d^{1/2}T^{3/4})$ regret in uniformly ergodic MDPs, where $d$ is the size of the compressed state-action space. \citet{hao2020provably} improve these results to $O(T^{2/3})$. 
More recently, \citet{wei2020learning} present three algorithms for average-reward MDPs with linear function approximation. Among these, FOPO achieves $O(\sqrt{T})$ regret but is computationally inefficient, and OLSVI.FH is efficient but obtains $O(T^{3/4})$ regret. The MDP-EXP2 algorithm is computationally efficient, and under similar assumptions as in \citet{politex} it obtains a $O(\sqrt{T})$ bound on \emph{expected regret} (a weaker guarantee than the high-probability bounds in other works). Our analysis shows a high-probability $O(\sqrt{T})$ regret bound under the same assumptions.

{\bf KL-regularized approximate policy iteration.} Our work is also related to approximate policy iteration algorithms which constrain each policy to be close to the previous policy in the sense of KL divergence. This approach was popularized by TRPO \citep{schulman2015trust}, where it was motivated as an approximate implementation of conservative policy iteration \citep{kakade2002approximately}.
Some of the related subsequent works include PPO \citep{schulman2017proximal}, MPO \citep{abdolmaleki2018maximum}, V-MPO \citep{vmpo}, and CPO \citep{achiam2017constrained}.  While these algorithms place a constraint on consecutive policies and are mostly heuristic, another line of research shows that using KL divergence as a regularizer has a theoretical justification in terms of either regret guarantees \citep{politex,hao2020provably,wei2020model,wei2020learning} or error propagation \citep{vieillard2020leverage,vieillard2020momentum}.

{\bf Experience replay.} Experience replay \citep{lin1992self} is one of the central tools for achieving good performance in deep reinforcement learning. While it is mostly used in off-policy methods such as deep Q-learning \citep{mnih2013playing,mnih2015human}, it has also shown benefits in value-gradient methods \citep{heess2015learning,lillicrap2016continuous}, and has been used in some variants of KL-regularized policy iteration \citep{abdolmaleki2018maximum,tomar2020mirror}. Its success has been attributed to removing some temporal correlations from data fed to standard gradient-based optimization algorithms. \citet{schaul2015prioritized} have shown that non-uniform replay sampling based on the Bellman error can improve performance. Unlike these works, we motivate experience replay from the perspective of online learning in MDPs \citep{even2009online} with the goal of approximating the average of past value functions well.

{\bf Continual learning.} Continual learning (CL) is the paradigm of learning a classifier or regressor that performs well on a set of tasks, where each task corresponds to a different data distribution. The tasks are observed sequentially, and the learning goal is to avoid forgetting past tasks without storing all the data in memory. This is quite similar to our goal of approximating the average of sequentially-observed Q-functions, where the data for approximating each Q-function has different distribution. 
In general, approaches to CL can be categorized as regularization-based \citep{kirkpatrick2017overcoming,zenke2017continual,farajtabar2020orthogonal,yin2020sola}, expansion-based \citep{rusu2016progressive}, and replay-based \citep{lopez2017gradient,chaudhry2018efficient,borsos2020coresets}, with the approaches based on experience replay typically having superior performance over other methods.
\section{Algorithm}\label{sec:api}

Our algorithm is similar to the \politex schema~\citep{politex}. In each phase $k$, \politex obtains an estimate $\widehat Q_{\pi_k}$ of the action-value function $Q_{\pi_k}$ of the current policy $\pi_k$, and then sets the next policy using the mirror descent update rule:
\begin{align}
    \pi_{k+1}(\cdot|x) 
    & = \argmax_{\pi} \widehat Q_{\pi_k}(x, \pi) - \eta^{-1} D_{KL}(\pi \parallel \pi_{k}(\cdot|x)) \notag \\
    & = \argmax_{\pi} \sum_{i=1}^k \widehat Q_{\pi_i}(x, \pi) + \eta^{-1} \mathcal{H}(\pi) \notag \\
    & \propto \exp \bigg( \eta \sum_{i=1}^k \widehat Q_{\pi_i}(x, \cdot) \bigg),
    \label{eq:kl_politex}
\end{align}
where $\cH(\cdot)$ is the entropy function. 
When the functions $\{ \widehat Q_{\pi_i}\}_{i=1}^k$ are tabular or linear, the above update can be implemented efficiently by simply summing all the table entries or weight vectors. However, with neural network function approximation, we need to keep all networks in memory and evaluate them at each step, which quickly becomes inefficient in terms of storage and computation. Some of the efficient implementations proposed in literature include keeping a subset of the action-value functions \citep{politex}, and using a parametric policy and optimizing the KL-regularized objective w.r.t. the parameters over the available data \citep{tomar2020mirror}.

Our proposed method, presented in Algorithm~\ref{alg:api}, attempts to directly approximate the average of all previous action-value functions
\begin{align*}
    Q_k(x, a) := \frac{1}{k} \sum_{i=1}^k Q_{\pi_i}(x, a).
\end{align*}
To do so, we only use a single network, and continually train it to approximate $Q_k$. At each iteration $k$, we obtain a dataset of tuples $\cD_k =\{(x_t, a_t, R_t)\}$, where $R_t$ is the empirical return from the state-action pair $(x_t, a_t)$.
We initialize $\widehat Q_k$ to $\widehat Q_{k-1}$ and update it by minimizing the squared error over the union of $\cD_k$ and the replay buffer $\cR$. We then update the replay buffer with all or a subset of data in $\cD_k$.

In the sequel, in Section~\ref{sec:regret}, we first show that by focusing on estimating the average Q-function, we can improve the regret bound of \politex under nearly identical assumption; in Section~\ref{sec:linear}, we instantiate this bound for linear value functions, where we can estimate the average Q-function simply by weight averaging; in Section~\ref{sec:implementation}, we focus on practical implementations, in particular, we  discuss the limitations of weight averaging, and provide details on how to leverage replay data when using non-linear function approximation; in Section~\ref{sec:experiments} we present our experimental results; and in Section~\ref{sec:discussion} we make final remarks.

\begin{algorithm}[t!]
\caption{Schema for policy iteration with replay}
\begin{algorithmic}[1]\label{alg:api}
\STATE \textbf{Input:} phase length $\tau$, num. phases $K$,  parameter $\eta$
\STATE \textbf{Initialize:} $\pi_1(a|x) = 1/|\cA|$, empty replay buffer $\mathcal{R}$
\FOR{$k=1,\ldots, K$}
\STATE 
Execute $\pi_k$ for $\tau$ time steps and collect data $\cD_k$
\STATE Compute $\widehat Q_k$, an estimate of $Q_{k} = \frac{1}{k}\sum_{i=1}^k Q_{\pi_i}$, from data $\cD_k$ and replay $\cR$ 
\STATE Set $\pi_{k+1}(a|x) \propto \exp\big(\eta k \widehat Q_k(x,a)\big)$
\STATE Update replay $\mathcal{R}$ with $\cD_k$
\ENDFOR
\STATE \textbf{Output:} $\pi_{K+1}$
	\end{algorithmic}
\end{algorithm} 

\section{Regret Analysis of \politex}\label{sec:regret}

In this section, we revisit the regret analysis of \politex \citep{politex} in ergodic average-reward MDPs, and show that it can be improved from $O(T^{3/4})$ to $O(\sqrt{T})$ under similar assumptions. Our analysis relies in part on a simple modification of the regret decomposition. Namely, instead of including the estimation error of each value-function $Q_{\pi_k}$ in the regret, we consider the error in the running average $Q_k$. When this error scales as $O(1/\sqrt{k})$ in a particular weighted norm, the regret of \politex is $O(\sqrt{T})$. As we show in the following section, this bound can be instantiated for linear value functions under the same assumptions as \citet{politex}.

\begin{assumption}[Boundedness]
\label{ass:bounded_diff}
Let $\widehat Q_{\pi_k} := k \widehat Q_k - (k-1) \widehat Q_{k-1}$. We assume that there exists a constant $Q_{\max}$ such that for all $k=1, ..., K$ and for all $x \in \cX$,
\[
\max_{a} \widehat Q_{\pi_k}(x, a) - \min_{a} \widehat Q_{\pi_k}(x, a)  \leq Q_{\max}.
\]
\end{assumption}
For the purpose of our algorithm, functions $\widehat Q_{\pi_k}$ are unique up to a constant. Thus we can equivalently assume that $\norm{\widehat Q_{\pi_k}(x, \cdot)}_\infty \leq Q_{\max}$ for all $x$.

Define $\widehat V_{\pi_k}(x) := \widehat Q_{\pi_k}(x, \pi_k)$. Let $V_k := \frac{1}{k} \sum_{i=1}^k V_{\pi_i}(x)$ and $\widehat V_k(x) := \sum_{i=1}^k \widehat V_{\pi_i}(x)$ be the average of the state-value functions and its estimate. We will require the estimation error of the running average to scale as in the following assumption.
\begin{assumption}[Estimation error]
\label{ass:est_err}
Let $\mu_*$ be the stationary state distribution of the optimal policy $\pi_*$.
With probability at least $1-\delta$, for a problem-dependent constant $C$, the errors in $\widehat Q_K$ and $\widehat V_K$ are bounded as
\begin{align*}
   \E_{x \sim \mu_*} [ \widehat V_K(x) -  V_K(x)]
    &\leq C \sqrt{\log(1/\delta) / K} \\   
    \E_{x \sim \mu_* }[ Q_K(x, \pi_*) - \widehat Q_K(x, \pi_*)]
    &\leq C \sqrt{\log(1/\delta) / K} \,.
\end{align*}
\end{assumption}

Define $S_\delta(|\cA|, \mu_*)$ as in \citet{politex}:
\begin{align*}
S_\delta(|\cA|, \mu_*) := \sqrt{\frac{\log|\cA|}{2}} + \big\langle \mu_*, \sqrt{\frac{1}{2}\log\frac{1}{\delta\mu_*}}\big\rangle .
\end{align*}
We bound the regret of \politex in the following theorem. Here, recall that $\tau$ is the length of each phase, and $\gamma$ and $\eta$ are defined in Assumption~\ref{ass:mixing} and Eq.~\eqref{eq:kl_politex}, respectively.
\begin{theorem}[Regret of \textsc{Politex}]
\label{thm:regret}
Let Assumptions \ref{ass:mixing}, \ref{ass:est_err}, and \ref{ass:bounded_diff} hold. 
For $\tau \geq \frac{\log T}{2 \log(1/\gamma)}$ and $\eta = \frac{\sqrt{8\log|\cA|}}{Q_{\max}\sqrt{K}}$, for a  constant $C_1$,  with probability  at least $1-4\delta$, the regret of \politex in ergodic average-reward MDPs is bounded as 
\begin{align*}
    R_T \leq \frac{C_1 (1+ Q_{\max})  S_\delta(|\cA|, \mu_*)\sqrt{\tau}}{(1-\gamma)^{2}} \sqrt{T}\,.
\end{align*}
\end{theorem}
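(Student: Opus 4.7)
The plan is to decompose the regret into (i) a \emph{mixing} term accounting for the deviation of actual trajectory rewards from the expected average rewards $J_{\pi_k}$ within each phase, and (ii) a \emph{pseudo-regret} term $\tau \sum_{k=1}^K (J_* - J_{\pi_k})$. The mixing term can be handled using Assumption~\ref{ass:mixing} and an Azuma/Freedman-type concentration: within phase $k$, starting from an arbitrary initial state, the distribution of the trajectory state under $\pi_k$ converges to $\mu_{\pi_k}$ at geometric rate $\gamma^j$, so the expected within-phase deviation telescopes to $O(1/(1-\gamma))$ plus a martingale fluctuation of order $\sqrt{\tau \log(K/\delta)}/(1-\gamma)$. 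The choice $\tau \geq \log T/(2\log(1/\gamma))$ forces $\gamma^\tau \leq T^{-1/2}$, and summing over phases gives a mixing contribution of order $\sqrt{T}/(1-\gamma)^{2}$ up to log factors.

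For the pseudo-regret, I would invoke the performance difference lemma, which for an ergodic MDP gives $J_* - J_{\pi_k} = \mathbb{E}_{x \sim \mu_*}[Q_{\pi_k}(x,\pi_*) - V_{\pi_k}(x)]$. Summing over $k$ and factoring yields
\begin{align*}
\sum_{k=1}^K (J_* - J_{\pi_k}) \;=\; K\, \mathbb{E}_{x \sim \mu_*}\bigl[Q_K(x,\pi_*) - V_K(x)\bigr],
\end{align*}
so the pseudo-regret equals $T \cdot \mathbb{E}_{x\sim\mu_*}[Q_K(x,\pi_*) - V_K(x)]$. Add and subtract $\widehat Q_K(x,\pi_*)$ and $\widehat V_K(x)$ to split this into an \emph{estimation error} piece, bounded by $2C\sqrt{\log(1/\delta)/K}$ via Assumption~\ref{ass:est_err}, and a \emph{mirror-descent} piece $\mathbb{E}_{x\sim\mu_*}[\widehat Q_K(x,\pi_*) - \widehat V_K(x)]$. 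The estimation piece then contributes $O(T/\sqrt{K}) = O(\sqrt{\tau T})$ to $R_T$.

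For the mirror-descent piece, observe that the POLITEX update in Eq.~\eqref{eq:kl_politex} is exactly the exponential-weights (Hedge) iterate run independently at each state $x$ on the reward sequence $\widehat Q_{\pi_k}(x,\cdot)$, whose per-state $\ell_\infty$-span is bounded by $Q_{\max}$ by Assumption~\ref{ass:bounded_diff}. The standard Hedge regret guarantee therefore gives, for each $x$,
\begin{align*}
\sum_{k=1}^K \widehat Q_{\pi_k}(x,\pi_*) - \sum_{k=1}^K \widehat V_{\pi_k}(x) \;\leq\; \frac{D_{KL}(\pi_*(\cdot|x)\,\|\,\pi_1(\cdot|x))}{\eta} + \frac{\eta K Q_{\max}^2}{8}.
\end{align*}
Integrating against $\mu_*$ with the uniform initialization $\pi_1$, and invoking the high-probability upper bound on $\mathbb{E}_{x\sim\mu_*}[\log(1/\pi_*(\cdot|x))]$ from \citet{politex}, produces the $S_\delta(|\cA|,\mu_*)$ factor. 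Plugging in $\eta = \sqrt{8\log|\cA|}/(Q_{\max}\sqrt{K})$ balances the two terms and bounds the mirror-descent piece by $O(Q_{\max}\, S_\delta(|\cA|,\mu_*)/\sqrt{K})$, contributing $O(Q_{\max}\, S_\delta(|\cA|,\mu_*)\sqrt{\tau T})$ to $R_T$.

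Combining the three contributions, tracking the $(1-\gamma)^{-2}$ factors that arise when changing measure between $\mu_{\pi_k}$ and $\mu_*$ via the contraction lemma referenced from Appendix~\ref{app:prelim}, and taking a union bound over the $\delta$-events of Assumption~\ref{ass:est_err} and of the mixing/Hedge concentrations, yields the stated bound. The main obstacle — and the source of the improvement over \citet{politex} — is in treating the estimation term: the original argument summed per-policy errors of order $1/\sqrt{\tau}$, which after multiplication by $\tau$ across the $K$ phases grew as $K\sqrt{\tau}$ and forced the $O(T^{3/4})$ rate after balancing. Replacing this with the running-average error bound in Assumption~\ref{ass:est_err}, which exploits the cancellation of mean-zero estimation noise across phases, is precisely what enables the $O(\sqrt{T})$ rate; verifying that assumption in the linear function approximation setting (Section~\ref{sec:linear}) is the complementary piece of the program.
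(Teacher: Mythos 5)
Your decomposition and your treatment of the pseudo-regret term match the paper's proof: performance difference lemma under $\mu_*$, bridging with $\widehat Q_{\pi_k}$ to split into an estimation-error piece (bounded by Assumption~\ref{ass:est_err} applied to the running averages $Q_K, \widehat V_K$, contributing $O(\tau\sqrt{K}) = O(\sqrt{\tau T})$) and a per-state Hedge regret piece yielding the $Q_{\max} S_\delta(|\cA|,\mu_*)\sqrt{K}$ term. Your closing remark correctly identifies why moving from per-policy errors to running-average errors is the source of the improvement in that piece.

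However, your handling of the mixing term $V_T = \sum_k\sum_t (J_{\pi_k} - r_t)$ has a genuine gap. You bound each phase separately by a burn-in of $O((1-\gamma)^{-1})$ plus a per-phase martingale fluctuation of $O(\sqrt{\tau\log(K/\delta)}/(1-\gamma))$, and then sum over phases. Summing $K$ such bounds gives $O(K/(1-\gamma) + K\sqrt{\tau}/(1-\gamma))$; with $K = T/\tau$ and $\tau = \Theta(\log T)$ as the theorem permits, this is $O(T/\sqrt{\log T})$, not $O(\sqrt{\tau T})$ --- it is exactly the $O(K\sqrt{\tau})$ bound from the original \politex analysis that the theorem is improving upon. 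The paper avoids this with two ingredients you are missing. First, the fluctuation term is controlled by a \emph{single} Doob martingale over the entire $T$-step trajectory (Lemma~\ref{lem:doob}), whose increments satisfy $\norm{B_i - B_{i-1}}_1 \le 2(1-\gamma)^{-1}$ under Assumption~\ref{ass:mixing}, so one application of Azuma gives $O(\sqrt{T}/(1-\gamma))$ rather than $K$ separate $\sqrt{\tau}$ terms. Second, the per-phase burn-in cannot be taken as $O((1-\gamma)^{-1})$ from an arbitrary starting distribution; instead one uses that the state-action distribution at the start of phase $k$ is already $\gamma^{\tau}$-close to $\nu_{\pi_{k-1}}$, and that $\norm{\nu_{\pi_k}-\nu_{\pi_{k-1}}}_1 \le (1-\gamma)^{-1}\max_x\norm{\pi_k(\cdot|x)-\pi_{k-1}(\cdot|x)}_1 \le (1-\gamma)^{-1}\eta Q_{\max}$ by a Markov-chain perturbation bound together with the stability of the exponential-weights update. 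The total burn-in is then $2T\gamma^\tau + (1-\gamma)^{-2}K\eta Q_{\max} = O(\sqrt{T} + \sqrt{K\log|\cA|}(1-\gamma)^{-2})$. Without this use of the slow-changing-policy property, the claimed $\sqrt{T}/(1-\gamma)^2$ mixing contribution does not follow from the steps you give. (A minor further point: the $(1-\gamma)^{-2}$ factor comes entirely from this $V_T$ analysis, not from any change of measure between $\mu_{\pi_k}$ and $\mu_*$ in the pseudo-regret; the performance difference lemma already places the expectation under $\mu_*$.)
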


\begin{proof}
We start by decomposing the cumulative regret, following similar steps as \citet{politex}:
\begin{equation}\label{eqn:regret_dec}
R_T = \sum_{k=1}^K \sum_{t=(k-1)\tau+1}^{k\tau} (J_* - J_{\pi_k}) + (J_{\pi_k} - r_t).
\end{equation}
The second term $V_T = \sum_{k=1}^K \sum_{t=(k-1)\tau+1}^{k\tau}(J_{\pi_k} -r_t)$ captures the sum of differences between observed rewards and their long term averages. In previous work, this term was shown to scale as $O(K\sqrt{\tau})$. We show that the analysis  can in fact be tightened to $O(\sqrt{T})$ using improved concentration bounds and the slow-changing nature of the policies. See  Lemma~\ref{lem:vtbound} in Appendix~\ref{sec:vtbound} for precise details.

The first term, which is also called \emph{pseudo-regret} in literature, measures the difference between the expected reward of the reference policy and the policies produced by the algorithm.
Applying the performance difference lemma~\citep{cao1999single}, we can write each pseudo-regret term as
\begin{align*}
    J_{*} -J_{\pi_k} &= \E_{x \sim \mu_{*}} \left[ Q_{\pi_{k}}(x, \pi_{*})-Q_{\pi_{k}}(x, \pi_k) \right].
\end{align*}

Now, notice that \politex is running exact mirror descent for loss functions $\widehat{Q}_{\pi_k}$. We bridge the pseudo-regret by the $\widehat{Q}_{\pi_k}$ terms:
\begin{align}
        R_{T1a}&= \tau\sum_{k=1}^K \E_{x \sim \mu_{*}} \left[ Q_{\pi_{k}}(x, \pi_{*})- \widehat{Q}_{\pi_k}(x, \pi_*) \right]  \label{eq:rt1a} \\
        R_{T1b} &= \tau\sum_{k=1}^K \E_{x \sim \mu_{*}} \left[ \widehat{Q}_{\pi_k}(x, \pi_k)-Q_{\pi_{k}}(x, \pi_k) \right] \label{eq:rt1b} \\
 R_{T2}&=
\tau\sum_{k=1}^K \E_{x \sim \mu_{*}} \left[ \widehat{Q}_{\pi_k}(x, \pi_*)-\widehat{Q}_{\pi_k}(x, \pi_k)\right]. \label{eq:rt2}
\end{align}
$R_{T2}$ can be bounded using the regret of mirror descent as in previous work \citep{politex}. Setting $\eta = \frac{\sqrt{ \log |\cA| }}{Q_{\max}\tau\sqrt{2K}}$, and using a union bound over all states, with probability at least $1-\delta$, $R_{T2}$ is bounded as
\begin{align*}
    R_{T2} \leq \tau Q_{\max} S_{\delta}(|\cA|, \mu_*) \sqrt{K}  \,.
\end{align*}

{\bf Bounding regret due to estimation error.} We now focus on bounding $R_{T1} = R_{T1a} + R_{T1b}$ under Assumption~\ref{ass:est_err}. We have the following:
\begin{align*}
   R_{T1a} &= \tau \sum_{k=1}^K \E_{x \sim \mu_*} \big[ Q_{\pi_k}(x, \pi_*) - \widehat{Q}_{\pi_k}(x, \pi_*) \big] \\
   &= (\tau K) \E_{x \sim \mu_*, a \sim \pi_*}\left[Q_K(x, a) - \widehat Q_K(x, a)\right] \\
   & \leq C \tau \sqrt{K\log(1/\delta) }.\\
R_{T1b} &= \tau \sum_{k=1}^K \E_{x \sim \mu_*} \big[\widehat V_{\pi_k}(x) - V_{\pi_k}(x) \big]  \\
&= \tau K \E_{x \sim \mu_*} \big[ \widehat V_K(x) - V_K(x) \big] \\
& \leq C \tau \sqrt{K\log(1/\delta)}.
\end{align*}
We can then obtain the final result by combining the bounds on $R_{T1a}$, $R_{T1b}$, $R_{T2}$, $V_T$ from Appendix~\ref{sec:vtbound}, and using union bound as well as the fact that $K = T / \tau$.
\end{proof}
\section{Linear Value Functions}\label{sec:linear}

In this section, we show that the estimation error condition in Assumption~\ref{ass:est_err} (and thus $O(\sqrt{T})$ regret) can be achieved under similar assumptions as in \citet{politex} and \citet{wei2020learning}, which we state next.

\begin{assumption}[Linear value functions]
\label{ass:linear}
The action-value function $Q_\pi$ of any policy $\pi$ is linear: 
$Q_\pi(x, a) = w_\pi^\top \phi(x, a)$,
where $\phi: \cS \times \cA \rightarrow \R^d$ is a known feature function such that $\max_{x, a} \norm{\phi(x, a)} \leq C_\Phi$.
\end{assumption}

\begin{assumption}[Feature excitation]
\label{ass:excite}
There exists a constant $\sigma^2$ such that for any policy $\pi$,
\[
\lambda_{\min} \left( \E_{(x, a) \sim \mu_\pi \otimes \pi} [\phi(x, a) \phi(x, a)^\top] \right) \geq \sigma^2 > 0.
\]
\end{assumption}

We now describe a simple procedure for estimating the average action-value functions $Q_k(x, a) = \phi(x, a)^\top w_k$, where $w_k = \frac{1}{k} \sum_{i=1}^k w_{\pi_i}$, such that the conditions of Assumption~\ref{ass:est_err} are satisfied. Essentially, we estimate each $Q_{\pi_i}$ using least-squares Monte Carlo and then average the weights. We will use the shorthand notation  $\phi_t = \phi(x_t, a_t)$. Let $\cH_i$ and $\cT_i$ be subsets of time indices in phase $i$ (defined later). 
We estimate  $Q_{\pi_i}$ as follows:
\begin{align}
    \widehat w_{\pi_i} 
    &= \bigg(\sum_{t \in \cH_i} \phi_t \phi_t^\top + \alpha I\bigg)^{-1} \sum_{t \in \cH_i} \phi_t R_t
\end{align}
where $R_t$ are the empirical $b$-step returns ($b$ is specified later), computed as
\begin{align}\label{eq:linear_reward}
R_t = \sum_{j=t}^{t+b} (r_t - \widehat J_{\pi_i}), \quad
    \widehat J_{\pi_i} = \frac{1}{|\cT_i|} \sum_{t \in \cT_i} r_t. 
\end{align}
We then estimate $w_k$ as $\widehat w_k = \frac{1}{k} \sum_{i=1}^k \widehat w_{\pi_i}$. Note that for this special case of linear value functions, we do not need to use the replay buffer in Algorithm~\ref{alg:api}. For analysis purposes, we divide each phase of length $\tau$ into $2m$ blocks of size $b$ and let $\cH_i$ ($\cT_i$) be the starting indices of odd (even) blocks in phase $i$. Due to the gaps between indices and fast mixing, this makes the data almost independent (we make this precise in Appendix~\ref{app:linear}) and the error easier to analyze. In practice, one may simply want to use all data. 

For a distribution $\mu$, let $\norm{x}_\mu$ denote the distribution-weighted norm such that $\norm{x}_\mu^2 = \sum_i \mu_i x_i^2$. Using Jensen's inequality, we have that $(\E_{x\sim \mu}[q(x)])^2 \leq \E_{x \sim \mu}[q(x)^2]$.  Thus, it suffices to bound the $Q$-function error in the distribution-weighted norm, $\norm{Q_K - \widehat Q_K}_{\mu_* \otimes \pi_*}$. Furthermore, given bounded features,
\begin{align*}
    \norm{\widehat Q_K - Q_K}_{\mu_* \otimes \pi_*} 
    &\leq C_{\Phi}\norm{\widehat w_K - w_K}_2 \,,
\end{align*}
so it suffices to bound the error in the weights. 
We bound this error in the following Lemma, proven in Appendix~\ref{app:linear}.
\begin{lemma}[Estimation error for linear functions]
\label{lem:linear_est}
Suppose that Assumptions \ref{ass:mixing}, \ref{ass:linear} and \ref{ass:excite} hold and that true action-value weights are bounded as $\norm{w_{\pi_i}}_2 \leq C_w$ for all $i =1,\ldots K$.
Then for any policy $\pi$, for $\alpha = \sqrt{\tau/K}$, $m \geq 72 C_{\Phi}^4 \sigma^{-2} (1 -\gamma)^{-2} \log (d/\delta)$, and $b \geq \frac{\log (T\delta^{-1}(1-\gamma)^{-1} ) }{ \log(1/\gamma)}$, there exists an absolute constant $c$ such that with probability at least $1-\delta$,
\begin{align*}
      \norm{\widehat w_K - w_K}_2 \leq  c \sigma^{-2}(C_w + C_{\Phi})b \sqrt{\frac{ \log(2d/\delta)}{Km}}.
\end{align*}
\end{lemma}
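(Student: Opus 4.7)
Since $\widehat w_K - w_K = \frac{1}{K}\sum_{i=1}^K (\widehat w_{\pi_i} - w_{\pi_i})$, it suffices to understand the per-phase error and then exploit averaging across phases to gain the $1/\sqrt{K}$ factor. My first step is to decompose that per-phase error. Writing the ridge estimator as $\widehat w_{\pi_i} = (\widehat\Sigma_i + \alpha I)^{-1} \sum_{t \in \cH_i} \phi_t R_t$ with $\widehat\Sigma_i := \sum_{t \in \cH_i} \phi_t\phi_t^\top$, I expand each target as $R_t = Q_{\pi_i}(x_t,a_t) + \xi_t + \beta_t^{\mathrm{trunc}} + \beta_t^{\widehat J}$, where $\xi_t$ is the zero-mean fluctuation of the $b$-step return about its conditional mean, $\beta_t^{\mathrm{trunc}}$ is the $O(\gamma^b)$ tail past step $b$, and $\beta_t^{\widehat J} = (b+1)(J_{\pi_i} - \widehat J_{\pi_i})$ absorbs the error of plugging $\widehat J_{\pi_i}$ into~\eqref{eq:linear_reward}. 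Using linearity (Assumption~\ref{ass:linear}),
\begin{align*}
\widehat w_{\pi_i} - w_{\pi_i}
&= -\alpha (\widehat\Sigma_i + \alpha I)^{-1} w_{\pi_i} + (\widehat\Sigma_i + \alpha I)^{-1} \sum_{t \in \cH_i} \phi_t\big(\xi_t + \beta_t^{\mathrm{trunc}} + \beta_t^{\widehat J}\big).
\end{align*}

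Second, I would control $\widehat\Sigma_i$ uniformly in $i$. Within a phase, the indices of $\cH_i$ are separated by at least $b$ steps, so by Assumption~\ref{ass:mixing} and the choice $b = \Theta(\log(T/\delta)/\log(1/\gamma))$, a coupling argument (in the spirit of Lemma~\ref{lem:contraction}) replaces the $m$ samples by i.i.d.\ draws from $\nu_{\pi_i}$ at a total-variation cost $O(m\gamma^b) \ll 1/T$. Matrix Bernstein applied to the surrogate, together with Assumption~\ref{ass:excite}, then yields $\widehat\Sigma_i \succeq (m\sigma^2/2)I$ with probability at least $1-\delta/(2K)$ whenever $m \geq c_1 C_\Phi^4 \sigma^{-2} (1-\gamma)^{-2}\log(d/\delta)$, which is precisely the lower bound on $m$ stated. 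On this event $\|(\widehat\Sigma_i + \alpha I)^{-1}\|_{\mathrm{op}} \leq 2/(m\sigma^2)$, so each contribution to $\widehat w_{\pi_i} - w_{\pi_i}$ reduces to operator-norm-times-linear-functional of its residual.

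The deterministic and bias pieces are then handled per phase: the regularization term contributes $O(\alpha C_w/(m\sigma^2))$, which with $\alpha = \sqrt{\tau/K}$ is within budget; the truncation term contributes $O(\gamma^b C_\Phi/\sigma^2) = 1/\mathrm{poly}(T)$ by the choice of $b$; and the $\widehat J$-error term is handled by a Hoeffding/Bernstein bound on $|\cT_i|^{-1}\sum_{t\in\cT_i} r_t$ (again via a block/coupling argument) giving $|J_{\pi_i} - \widehat J_{\pi_i}| = O(\sqrt{\log(1/\delta)/m})$ and thus $O(bC_\Phi/(\sigma^2\sqrt{m}))$ per phase. The genuinely stochastic term $(\widehat\Sigma_i + \alpha I)^{-1}\sum_{t\in \cH_i} \phi_t \xi_t$ has conditional mean zero with $|\xi_t| \leq O(b)$ and $\|\phi_t\| \leq C_\Phi$; averaging $\frac{1}{K}\sum_i(\cdot)$ across phases and using a vector Freedman/Azuma inequality in $\R^d$ concentrates it at rate $O\big(bC_\Phi \sqrt{\log(d/\delta)/(Km)}/\sigma^2\big)$. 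Collecting the four bounds and inserting $\alpha$, $m$, $b$ gives the stated $c\sigma^{-2}(C_w + C_\Phi)b\sqrt{\log(2d/\delta)/(Km)}$.

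The delicate step will be the cross-phase concentration. Because $\pi_i$ depends on the data of all previous phases, the residuals $\widehat w_{\pi_i} - w_{\pi_i}$ form only a martingale-difference sequence, not independent summands. The argument must condition on the $\sigma$-algebra generated by phases $1,\ldots,i-1$ and then re-verify conditionally that (i) the within-phase mixing/coupling argument still yields $\widehat\Sigma_i \succeq (m\sigma^2/2)I$ and approximate independence of the $\phi_t\xi_t$ terms, and (ii) the conditional covariance of the per-phase noise vector is $O(b^2 C_\Phi^2/(m\sigma^4))$, so that a vector Freedman inequality delivers the required $\sqrt{\log(d/\delta)/K}$ tail after summing in $i$. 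Modulo this conditional-independence bookkeeping, the remaining ingredients — the bound $\|w_{\pi_i}\| \leq C_w$, the $\widehat J$ deviation, and the truncation tail — are routine once $b$ and $m$ are set as in the statement.
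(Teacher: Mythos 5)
Your decomposition of $R_t$ into signal, zero-mean return noise, truncation bias, and $\widehat J_{\pi_i}$-error, and the resulting expansion of $\widehat w_{\pi_i}-w_{\pi_i}$, match the paper's proof, and your handling of the Gram matrix (the paper uses a Doob-martingale/matrix-Azuma argument rather than a coupling-plus-Bernstein one, but both are viable), the regularization bias, and the truncation bias is sound. The genuine gap is your treatment of the $\widehat J_{\pi_i}$ term: you bound it per phase in absolute value via $|J_{\pi_i}-\widehat J_{\pi_i}| = O(\sqrt{\log(1/\delta)/m})$, which yields a contribution of order $bC_\Phi\sigma^{-2}\sqrt{\log(1/\delta)/m}$ \emph{per phase}; averaging $K$ such absolute bounds still gives $O(b\sqrt{\log(1/\delta)/m})$, a factor of $\sqrt{K}$ larger than the claimed $O(b\sqrt{\log(2d/\delta)/(Km)})$. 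Recovering the stated rate requires cancellation of the signed quantities $J_{\pi_i}-\widehat J_{\pi_i}$ \emph{across} phases: the paper (Appendix~\ref{app:jbias}) bounds
\begin{align*}
\frac{b}{Km}\Big\|\sum_{i=1}^{K}(J_{\pi_i}-\widehat J_{\pi_i})\sum_{t\in\cH_i}\widehat M_i^{-1}\phi_t\Big\|_2
\end{align*}
as a single centered sum under a product-of-marginals measure over block starts (with even-block marginals set to the stationary distributions $\nu_{\pi_i}$), applying the norm-subGaussian inequality to all $Km$ terms jointly, which is what produces the extra $1/\sqrt{K}$.

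A second, related issue is the cross-phase concentration of the noise term. The per-phase vectors $v_i=(\widehat\Sigma_i+\alpha I)^{-1}\sum_{t\in\cH_i}\phi_t\xi_t$ are \emph{not} conditionally centered given the data of phases $1,\dots,i-1$, because $\widehat\Sigma_i$ is built from the same phase-$i$ features with which the $\xi_t$ are correlated; a vector Freedman inequality therefore does not apply to $\sum_i v_i$ as written, and this is more than ``bookkeeping.'' The paper sidesteps the problem by performing the independent-blocks change of measure of \citet{Yu94} globally over all $Km$ odd blocks (at total-variation cost $(Km-1)\beta_b$, absorbed by the choice of $b$), under which $\widehat M_i^{-1}\phi_t z_t$ is exactly centered and Lemma~\ref{lemma:nsg} applies to the whole sum. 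Your plan can be repaired either by adopting that change of measure, or by replacing $\widehat\Sigma_i^{-1}$ with the deterministic population matrix plus a separately controlled remainder, but the martingale structure you invoke is not present in the estimator as defined.
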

Furthermore, in Appendix~\ref{app:linear_vpi}, we show that the error in the average state-value function satisfies the following: 
\begin{align*}
     \E_{\mu_*} [\widehat V_K(x) - V_K(x)] \leq 
    cC_{\Phi}|\cA|(C_w  + C_{\Phi})\frac{b}{\sigma^2} \sqrt{\frac{ \log(2d/\delta)}{Km}}.
\end{align*}
We have demonstrated that Assumption~\ref{ass:est_err} can be satisfied with linear value functions. 
For Assumption~\ref{ass:bounded_diff}, it suffices for the weight estimates $\{\widehat w_{\pi_i}\}$ to be bounded. This will be true, since we assume that the true weights are bounded and we can bound the error in the weight space. Thus \politex has an $O(\sqrt{T})$ regret in this setting (though note that we incur an extra $|\cA|$ factor coming from the $\widehat V_K$ error bound). 
\section{Practical Implementation}\label{sec:implementation}
As mentioned earlier, the key idea in our policy update is to obtain an estimate $\widehat{Q}_k$ of the average of all the Q-functions in previous phases. We have seen that when we use linear functions to approximate the Q-functions, we can simply average the weights in order to get an estimate of the average Q-function. However, in practice, we often need to use non-linear functions, especially neural networks, to approximate complex Q-functions. In this section, we discuss how to efficiently implement our algorithm with non-linear function approximation.

\subsection{Weight Averaging}\label{sec:weight_ave}
The simplest idea may be averaging the weights of neural networks. However, a crucial difference from the linear setting is that averaging the weights of the neural networks is not equivalent to averaging the functions that they represent: the function that a neural network represent is invariant to the permutation of the hidden units, and thus two networks with very different weights can represent similar functions. Therefore, this implementation may only succeed when all the Q-function approximations are around the same local region in the weight space. Thus, when we learn the new Q-function $\widehat Q_{\pi_k}$ in phase $k$, we should initialize it with $\widehat Q_{k-1}$, run SGD with new data, and then average with $\widehat Q_{k-1}$.

\subsection{Experience Replay}\label{sec:replay}
Another natural idea is to leverage the data from replay buffer to obtain an estimate of the average Q-function. We elaborate the details below. We use simple $b$-step Monte Carlo estimate for the $Q$ value of each state-action pair. For any $(i-1)\tau + 1 \le t \le i\tau - b $, we can estimate the state-action value of $(x_t, a_t)$ by $b$-step cumulative reward\footnote{This is a practical implementation of Eq.~\eqref{eq:linear_reward}, i.e., we do not split the data in each phase into blocks.}
\[
R_t = \sum_{j = t}^{t+b} (r_j - \widehat J_{\pi_i}), \quad \widehat J_{\pi_i} = \frac{1}{\tau} \sum_{j=(i-1)\tau+1}^{i\tau} r_j.
\]
In the following, we denote by $\tau':=\tau-b$ the maximum number of data that we can collect from every phase. At the end of each phase, we store all or a subset of the $(x_t, a_t, R_t)$ tuples in our replay buffer $\mathcal{R}$.
We extract feature $\phi(x, a)\in\R^d$ for the state-action pair $(x, a)$ and let $\mathcal{F}\subseteq\{f:\R^d\mapsto\R \}$ be a class of functions that we use to approximate the Q-functions. For phase $i$, we propose the following method to estimate $Q_{\pi_i}$: $\widehat Q_{\pi_i}(x, a)=\widehat f(\phi(x, a))$, where $\widehat f \in \arg\min_{f\in\mathcal{F}} \ell_i(f)$ and
\begin{equation}\label{eq:squared}
\ell_i(f):=\frac{1}{\tau'}\sum_{t=(i-1)\tau + 1}^{i\tau - b}(f(\phi(x_t, a_t)) - R_t)^2.
\end{equation}
Suppose that we store all the data from the previous phases in the replay buffer, then in order to estimate the average of the Q-functions of the first $k$ phases, i.e., $\widehat Q_k$, we propose to use the heuristic that minimizes the average of the $k$ squared loss functions defined in Eq.~\eqref{eq:squared}, i.e., $\frac{1}{k}\sum_{i=1}^k \ell_i(f)$.

{\bf Subsampling and coreset.}
In practice, due to the high memory cost, it may be hard to store all the data from the previous phases in the replay buffer. We found that a simple strategy to resolve this issue is to begin with storing all the data from every phase, and add a limit on size of the replay buffer. When the buffer size exceeds the limit, we eliminate a subset of the data uniformly at random.

Another approach is to sample a subset of size $s$ from the data collected in each phase. Denote this subset by $\mathcal{R}_i$ for phase $i$. Thus in the $k$-th phase, we have $\tau'$ data $\mathcal{D}_k$ from the current phase as well as $s(k-1)$ data from the replay buffer $\mathcal{R}$. First, suppose that the $s$ data points are sampled uniformly at random. We can then minimize the following objective:
\begin{equation}\label{eq:uniform_downsample}
    \min_{f\in\mathcal{F}}\frac{1}{k}\Big(\ell_i(f) + \sum_{i=1}^{k-1}\widehat{\ell}_i(f)\Big),
\end{equation}
where $\widehat{\ell}_i(f) := \frac{1}{s}\sum_{(x_t, a_t, R_t)\in\mathcal{R}_i} (f(\phi(x_t, a_t)) - R_t)^2$ is an unbiased estimate of $\ell_i(f)$. Further, uniform sampling is not the only way to construct an unbiased estimate. In fact, for any discrete distribution, with PMF $q=\{q_t\}$, over the $\tau'$ data in $\mathcal{D}_i$, we can sample $s$ data points according to $q$ and construct
\begin{equation}\label{eq:hatell_q}
    \widehat{\ell}_i(f) := \frac{1}{\tau'}\sum_{(x_t, a_t, R_t)\in\mathcal{R}_i} \frac{1}{q_t} (f(\phi(x_t, a_t)) - R_t)^2,
\end{equation}
in order to obtain an unbiased estimate of $\ell_i(f)$. As shown by~\citet{bachem2017practical}, by choosing $q_t \propto (f(\phi(x_t, a_t)) - R_t)^2$, we can minimize the variance of $\widehat{\ell}_i(f)$ for any fixed $f$. In the following, we call the subset of data sampled according to this distribution a \emph{coreset} of the data. In the experiments in Section~\ref{sec:experiments}, we show that thanks to the variance reduction effect, sampling a coreset often produces better performance than sampling a subset uniformly at random, especially when the rewards are sparse.

{\bf Comparison to \citet{politex}.}
Our algorithm can be considered as an efficient implementation of \politex~\citep{politex} via experience replay. In the original \politex algorithm, the Q-functions are estimated using Eq.~\eqref{eq:squared} for each phase, and all the functions are stored in memory. When an agent interacts with the environment, it needs to evaluate all the Q-functions in order to obtain the probability of each action. This implies that the time complexity of computing action probabilities increases with the number of phases, and as a result the algorithm is hard to scale to a large number of phases. Although we can choose to evaluate a random subset of the Q-functions to estimate the action probabilities in \textsc{Politex}, our implementation via experience replay can still be faster since we only need to evaluate a single function, trained with replay data, to take actions.

\section{Experiments}\label{sec:experiments}
In this section, we evaluate our implementations empirically. We make comparisons with several baselines in two control environments.

{\bf Environments.}
We use two control environments with the simulators described in~\citet{tassa2018deepmind}. Both environments are episodic with episode length $1000$.
The environments we evaluate are: 
\begin{itemize}
    \item \emph{Cart-pole}~\citep{barto1983neuronlike}: The goal of this environment is to balance an unactuated pole by applying forces to a cart at its base. We discretize the continuous force to $5$ values: $\{-2, -1, 0, 1, 2\}$.
    The reward at each time step is a real number in $[0, 1]$. We end episodes early when the pole falls (we use rewards less than $0.5$ an indicator for falling), and assume zero reward for the remaining steps when reporting results.
    \item \emph{Ball-in-cup}: Here, an actuated planar receptacle can translate in the vertical plane in order to swing and catch a ball attached to its bottom. This task has a sparse reward: $1$ when the ball is in the cup, and $0$ otherwise. We discretize the two-dimensional continuous action space to a $3\times 3$ grid, i.e., $9$ actions in total.
\end{itemize}

\begin{figure*}[ht]
\centering
\includegraphics[width=0.32\textwidth]{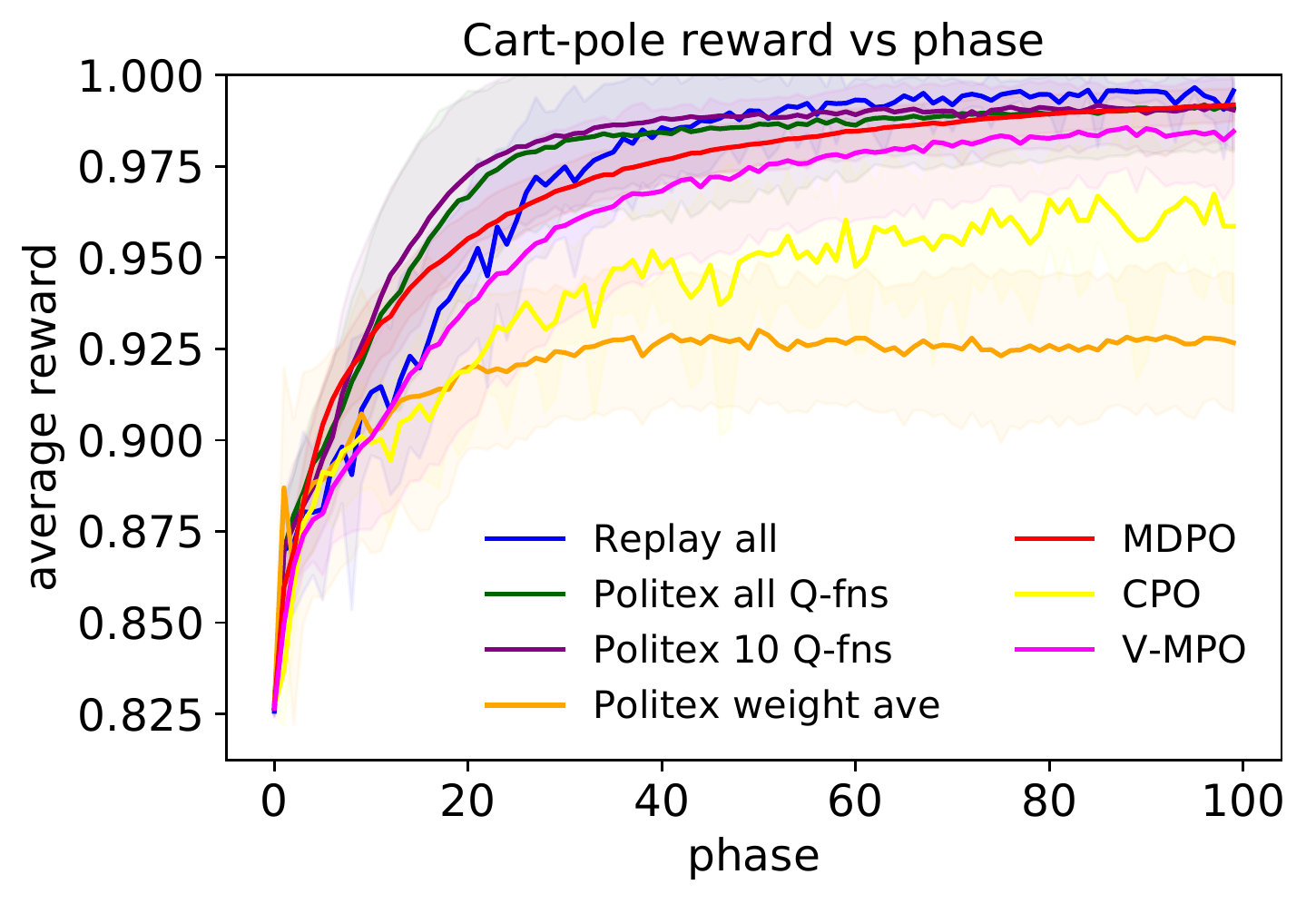}
{\includegraphics[width=0.32\textwidth]{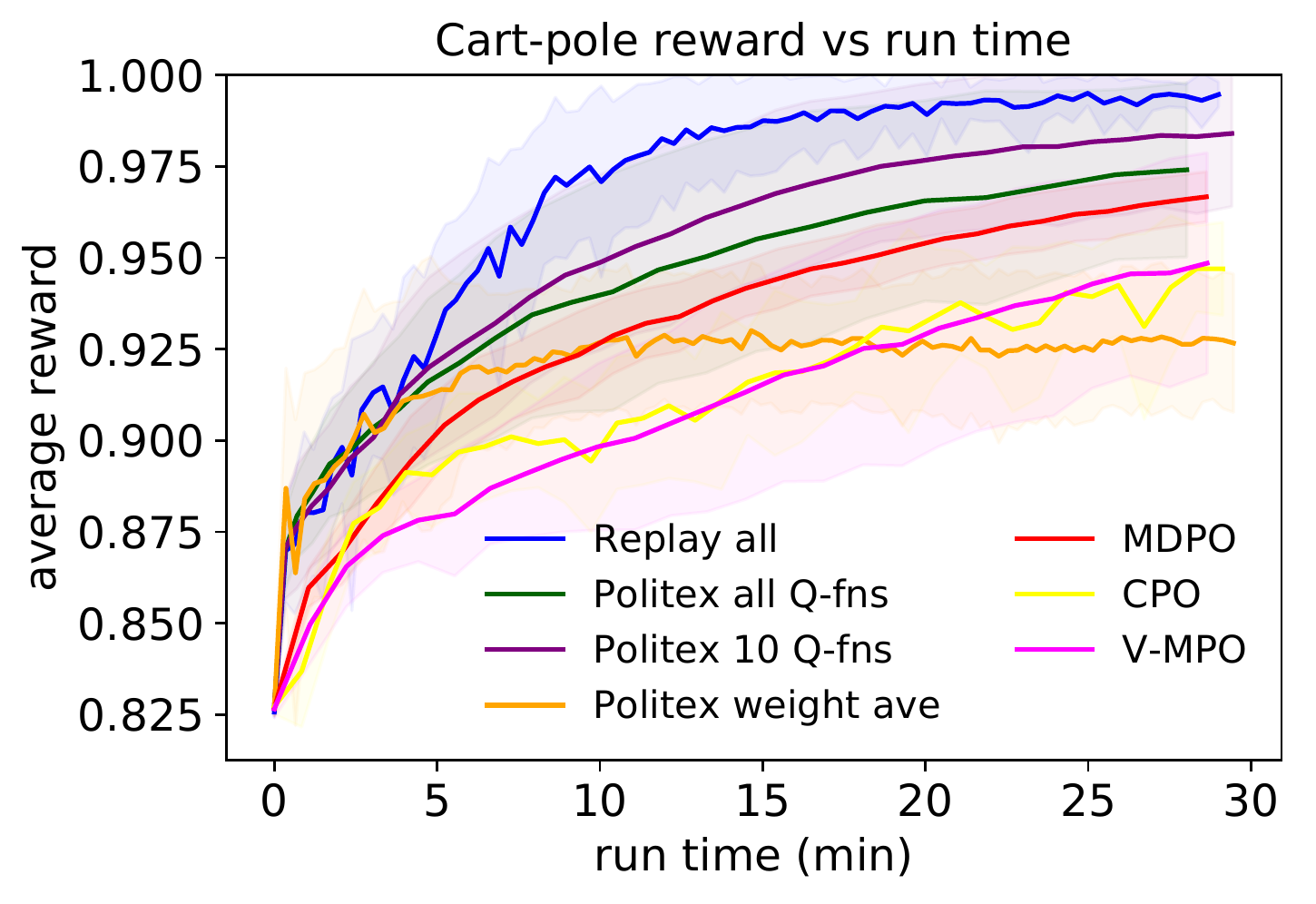}}
{\includegraphics[width=0.32\textwidth]{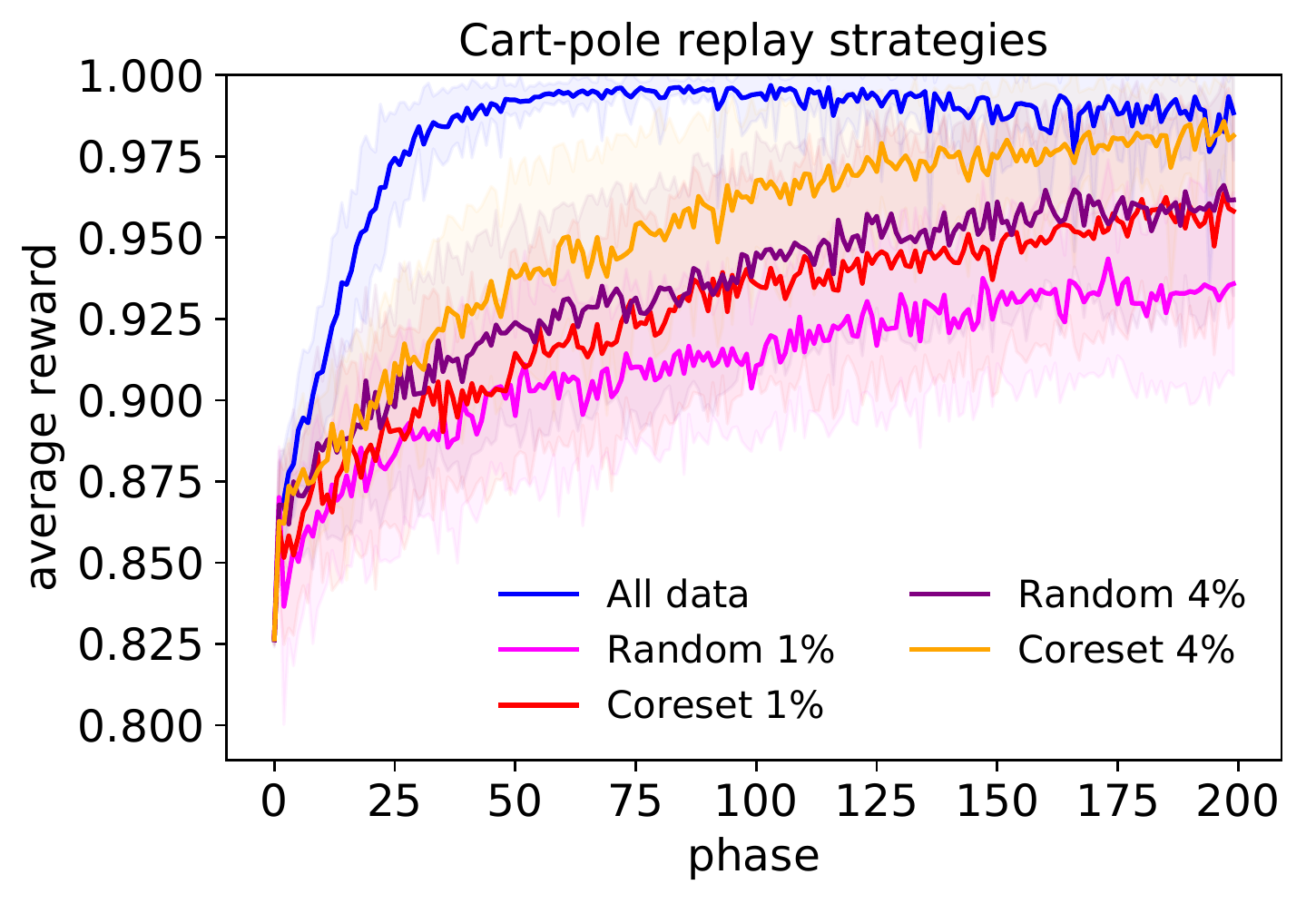}}
\subfigure[Iteration complexity] {\includegraphics[width=0.32\textwidth]{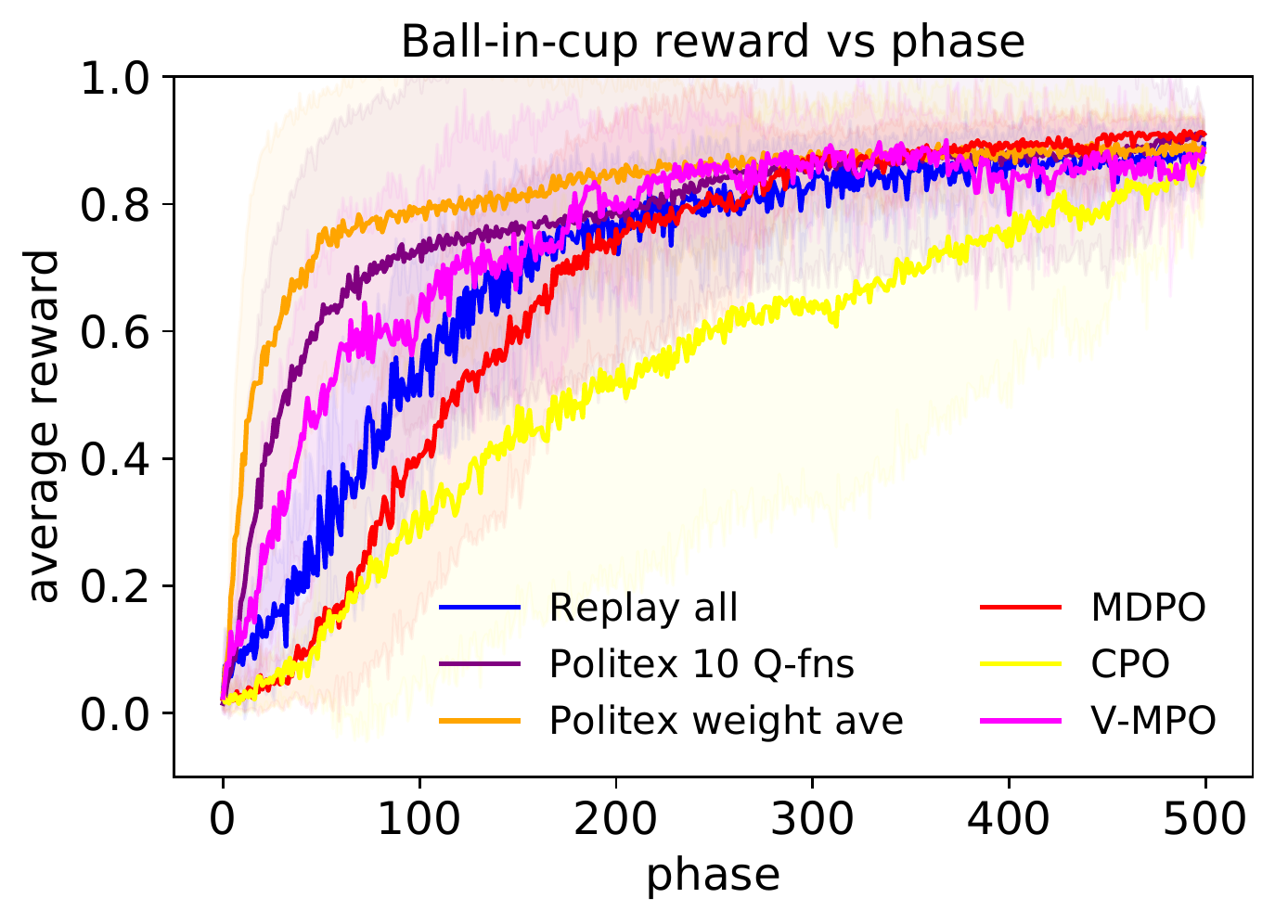}}
\subfigure[Training speed]
{\includegraphics[width=0.32\textwidth]{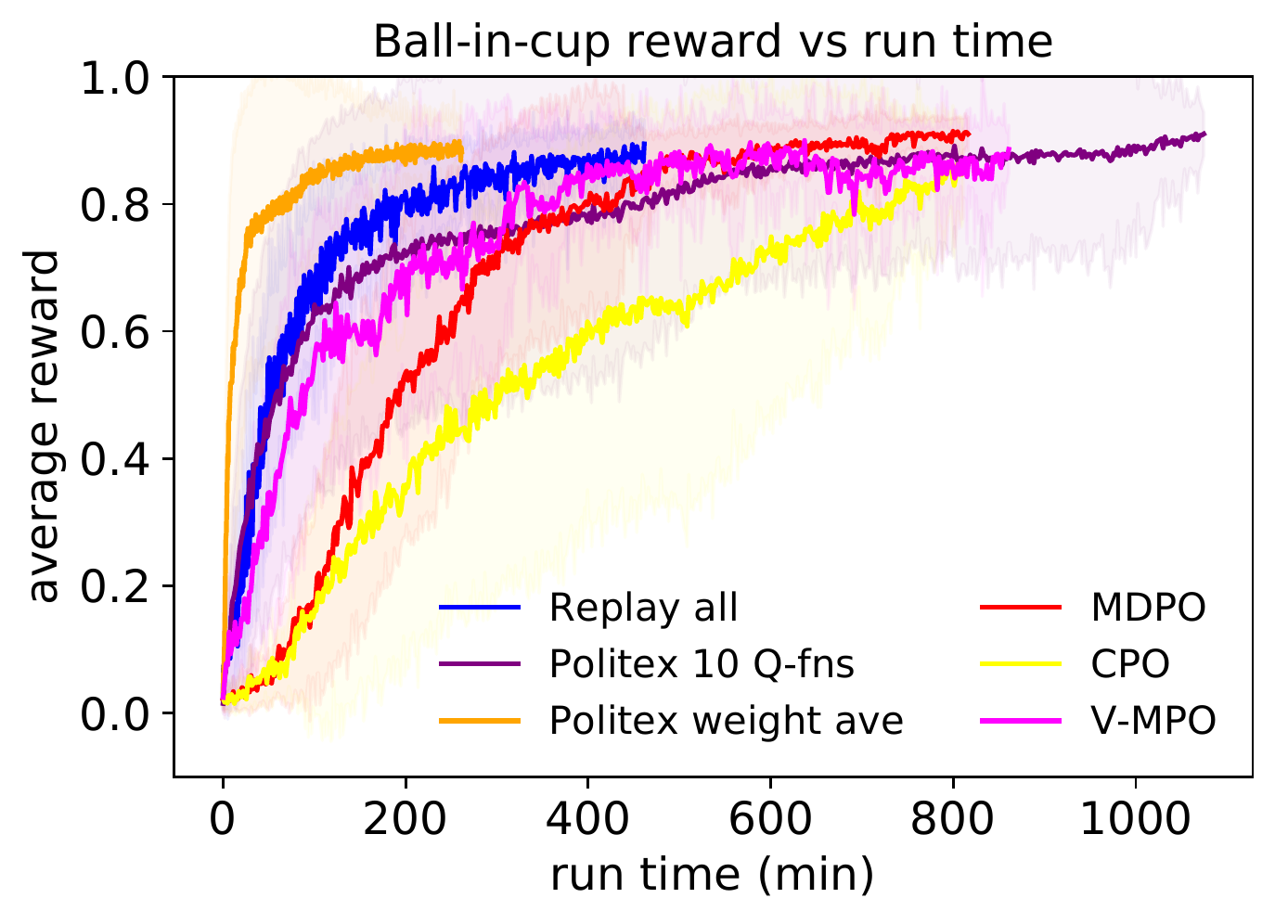}} 
\subfigure[Replay subsampling strategies]{\includegraphics[width=0.32\textwidth]{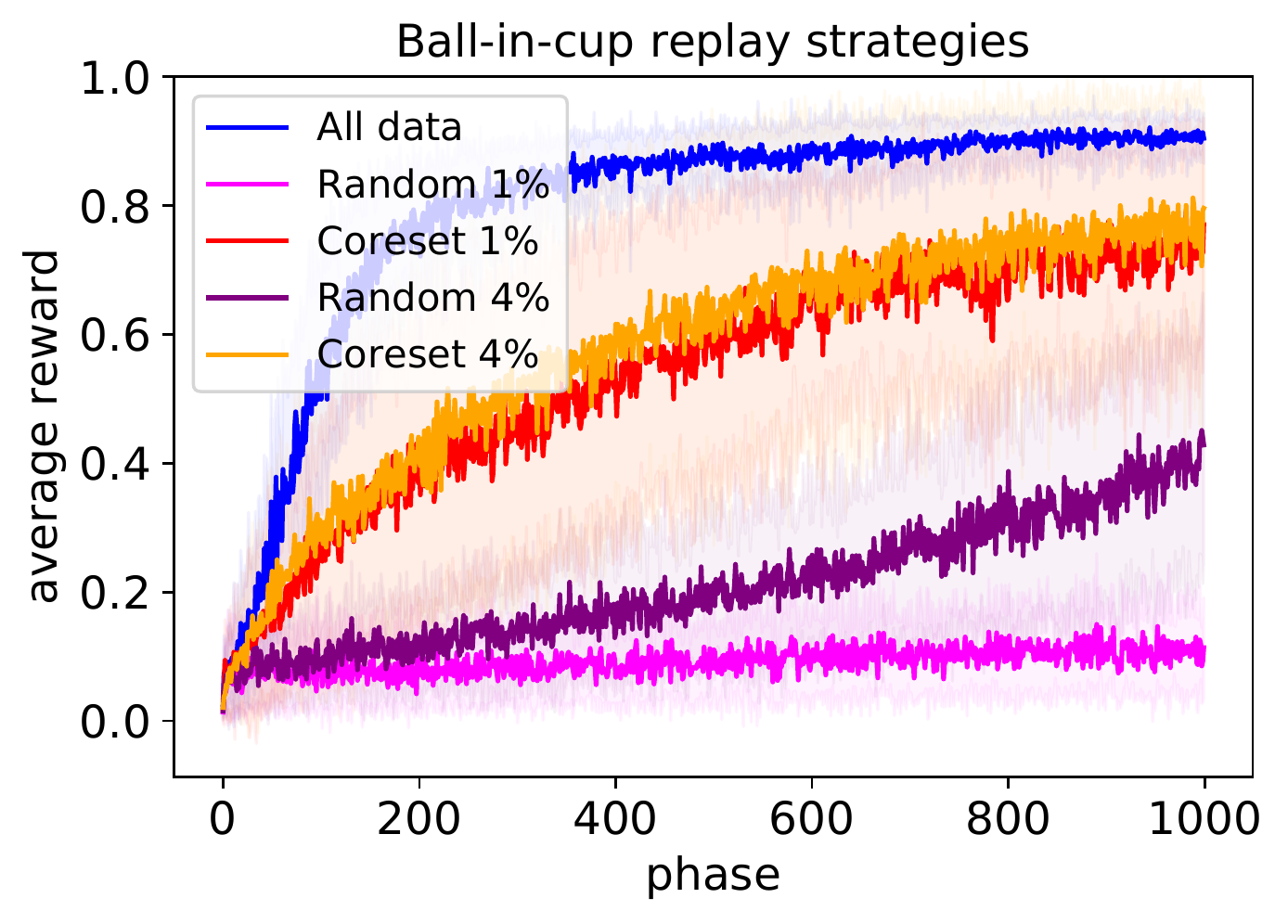}} 
\caption{Experiments on the Cart-pole environment (top) and Ball-in-cup (bottom).}
\label{fig:experiments}
\end{figure*}

{\bf Algorithms.}
We compare the following algorithms: our proposed implementation of \politex using experience replay, \politex using weight averaging, the original \politex algorithm, the variant of \politex that averages $10$ randomly selected Q-functions, the mirror descent policy optimization (MDPO) algorithm~\citep{tomar2020mirror}, the constrained policy optimization (CPO)~\citep{achiam2017constrained}, and the V-MPO algorithm~\citep{song2019v}.

For all the algorithms, we extract Fourier basis features~\citep{konidaris2011value} from the raw observations for both environments: for Cart-pole, we use $4$ bases and for Ball-in-cup we use $2$ bases. For variants of \politex we construct the state-action features $\phi(x, a)$ by block one-hot encoding, i.e., we partition the $\phi(x, a)$ vector into $|\mathcal{A}|$ blocks, and set the $a$-th block to be the Fourier features of $x$ and other blocks to be zero.
We approximate $Q$-functions using neural networks with one hidden layer and ReLU activation: the width of the hidden layer is $50$ for Cart-pole and $250$ for Ball-in-cup.
For MDPO, CPO and V-MPO, the algorithms use a policy network whose input and output are the state feature and action probability, respectively. We use one hidden layer networks with width $50$ for Cart-pole and $250$ for Ball-in-cup. These three algorithms also need a value network, which takes the state feature as input and outputs its value for the current policy. For both environments, we use one hidden layer network with width $50$.

For Cart-pole, we choose phase length $\tau=10^4$ and for Ball-in-cup, we choose $\tau=2\times 10^4$. Since the environments are episodic, we have multiple episodes in each phase. For Cart-pole, since the training usually does not need a large number of phases, we do not set limit on the size of the replay buffer, whereas for Ball-in-cup, we set the limit as $2\times 10^6$. For our algorithm and the variants of \textsc{Politex}, we choose parameter $\eta$ in $\{5, 10, 20, 40, 80, 160\}$ and report the result of each algorithm using the value of $\eta$ that produces the best average reward. For MDPO, CPO and V-MPO, we note that according to Eq.~\eqref{eq:kl_politex}, the KL regularization coefficient between policies is the reciprocal of the $\eta$ parameter in our algorithm, we also run these baseline algorithms in the same range of $\eta$ and report the best result. In addition, in CPO, we set the limit in KL divergence between adjacent policies $0.001\eta$, which, in our experiments, leads to good performance. We treat the length of Monte Carlo estimate $b$ as a hyper parameter, so we choose it in $\{100, 300\}$ and report the best performance.

{\bf Results.} We run each algorithm in each environment $20$ times and report the average results as well as the standard deviation of the $20$ runs (shaded area). We report the average reward in each phase before training on the data collected during the phase. The results are presented in Figure~\ref{fig:experiments}. Every run is conducted on a single P100 GPU.

From Fig.~\ref{fig:experiments}(a), we can see that the iteration complexity (average reward vs number of phases) and final performance of the experience replay implementation using all the data from past phases (blue curve) are similar to many state-of-the-art algorithms, such as \politex and V-MPO. Meanwhile, according to Fig.~\ref{fig:experiments}(b), the experience replay implementation achieves strong performance in training speed, i.e., best performance at $30$ minute in Cart-pole and second best performance at $200$ minute in Ball-in-cup.
We note here that the training time includes both the time for data collection, i.e., interacting with the environment simulator and the training of a new policy. The observation that the experience replay based implementation is faster than the original \politex and the implementation that uses $10$ Q-functions can be explained by the fact that the replay-based algorithm only uses a single Q-function and thus is faster at data collection, as discussed in Section~\ref{sec:replay}. In addition, that the replay-based algorithm runs faster than MDPO, CPO, and V-MPO can be explained by its simplicity: in variants of \textsc{Politex}, we only need to approximate the Q-function, whereas in MDPO, CPO, and V-MPO, we need to train both the policy and value networks.

The \politex weight averaging scheme achieves 
the best performance on Ball-in-cup in both iteration complexity and training speed, but does not converge to a solution that is sufficiently close to the optimum on Cart-pole. Notice that for weight averaging, we used the initialization technique mentioned in Section~\ref{sec:weight_ave}. Considering the consistent strong performance of experience replay in both environments, we still recommend applying experience replay when using non-linear function approximation.

In Fig.~\ref{fig:experiments}(c), we can see that when we only sample a small subset ($1\%$ or $4\%$) of the data to store in the replay buffer, the coreset technique described in Section~\ref{sec:replay} achieves better performance due to the variance reduction effect. This improvement is quite significant in the Ball-in-cup environment, which has sparse rewards. Notice that sampling coreset only adds negligible computational overhead during training, and thus we recommend the coreset technique when there exists a strict memory constraint.
\section{Discussion}\label{sec:discussion}

The main contributions of our work are an improved analysis and practical implementation of \textsc{Politex}.
On the theoretical side, we show that \politex obtains an $O(\sqrt{T})$ high-probability regret bound in uniformly mixing average-reward MDPs with linear function approximation, which is the first such bound for a computationally efficient algorithm. The main limitation of these result is that, similarly to previous works, they hold under somewhat strong assumptions which circumvent the need for exploration. An interesting future work direction would be to relax these assumptions and incorporate explicit exploration instead.

On the practical side, we propose an efficient implementation with neural networks that relies on experience replay, a standard tool in modern deep RL. 
Our work shows that experience replay and KL regularization can both be viewed as approximately implementing mirror descent policy updates within a policy iteration scheme. This provides an online-learning  justification for using replay buffers in policy iteration, which is different than the standard explanation for their success.
Our work also suggests a new objective for storing and prioritizing replay samples, with the goal of approximating the average of value functions well. This goal has some similarities with continual learning, where experience replay has also lead to empirical successes. One interesting direction for future work would be to explore other continual learning techniques in the approximate implementation of mirror descent policy updates.

\section*{Acknowledgements}
We would like to thank Mehrdad Farajtabar, Dilan G\"or\"ur, Nir Levine, and Ang Li for helpful discussions.

\bibliographystyle{abbrvnat}
\bibliography{arxiv/references.bib}
\newpage
\appendix
\section*{Appendix}
\section{Preliminaries}
\label{app:prelim}

We state some useful definitions and lemmas in this section. 
\begin{lemma}
\label{lem:contraction}
Let $X_1$ and $X_2$ be a pair of distribution vectors. Let $H$ be the transition matrix of an ergodic Markov chain with a stationary distribution $\nu$, and ergodicity coefficient (defined in Assumption~\ref{ass:mixing}) upper-bounded by $\gamma < 1$. Then
\begin{align*}
    \norm{ (H^m)^\top (X_1 - X_2 )}_1 \leq \gamma^m \norm{X_1 - X_2}_1 \,.
\end{align*}
\end{lemma}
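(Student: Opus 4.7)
The plan is to derive the bound by iterating the one-step contraction implied by the definition of the ergodicity coefficient. Two elementary facts are needed. First, since $X_1$ and $X_2$ are probability distributions, their difference $z := X_1 - X_2$ satisfies $z^\top \mathbf{1} = 0$. Second, $H$ is row-stochastic, so $H\mathbf{1} = \mathbf{1}$, and therefore the mean-zero property is preserved under right multiplication by $H$: for any row vector $z$ with $z^\top \mathbf{1} = 0$, we have $(z^\top H)\mathbf{1} = z^\top(H\mathbf{1}) = z^\top \mathbf{1} = 0$.

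Next, I would observe that the definition in Assumption~\ref{ass:mixing} is positively homogeneous: for any nonzero $z$ with $z^\top \mathbf{1}=0$, applying the definition to $z / \|z\|_1$ yields
\[
\|z^\top H\|_1 \leq \gamma(H) \|z\|_1 \leq \gamma \|z\|_1 ,
\]
and the case $z=0$ is trivial. This promotes the supremum definition to a uniform contraction on all mean-zero vectors.

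Now define $z_k := (H^k)^\top z$, regarded as a row vector equal to $z^\top H^k$. By the first paragraph, $z_0^\top \mathbf{1} = 0$, and by induction using the preservation property, $z_k^\top \mathbf{1} = 0$ for all $k \geq 0$. Applying the displayed contraction at each step gives
\[
\|z_{k+1}\|_1 = \|z_k^\top H\|_1 \leq \gamma \|z_k\|_1.
\]
Iterating $m$ times yields $\|z_m\|_1 \leq \gamma^m \|z_0\|_1 = \gamma^m \|X_1 - X_2\|_1$, which is precisely the claim once we identify $z_m = (H^m)^\top (X_1 - X_2)$.

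There is essentially no hard step here; the only subtlety worth checking carefully is that the mean-zero property propagates, which is what licenses applying the ergodicity-coefficient bound at every iteration. If $H$ were not row-stochastic, this induction would fail and the iteration would collapse.
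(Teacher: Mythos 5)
Your proof is correct, and it takes a genuinely different route from the paper's. The paper proves the one-step contraction by expanding $X_1$ and $X_2$ in a basis of left eigenvectors of $H$: since both are distributions, the coefficients on the stationary eigenvector $\nu$ coincide, so the difference lies in the span of the eigenvectors $v_i$ with $v_i^\top \mathbf{1}=0$, where the ergodicity coefficient applies; the extension to powers of $H$ is then obtained by noting that $H^\top$ maps this combination to $\sum_i \lambda_i(\alpha_i-\beta_i)v_i$, which still sums to zero. You instead bypass the spectral decomposition entirely: you observe that $z = X_1 - X_2$ has zero sum, that row-stochasticity ($H\mathbf{1}=\mathbf{1}$) preserves the zero-sum property under $z \mapsto z^\top H$, and then iterate the homogeneous form of the ergodicity-coefficient bound $m$ times. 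Your argument is more elementary and, arguably, slightly more robust: the paper's proof implicitly assumes the left eigenvectors of $H$ form a basis (i.e., diagonalizability), which need not hold for an arbitrary ergodic transition matrix, whereas your induction on the zero-sum property requires nothing beyond row-stochasticity. Both proofs isolate the same essential point --- that the mean-zero property propagates so the contraction can be applied at every step --- but yours makes that the explicit engine of the argument rather than a by-product of the eigenstructure.
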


\begin{proof}
Let $\{v_1, ..., v_{n}\}$ be the normalized left eigenvectors of $H$ corresponding to ordered eigenvalues $\{\lambda_1, ..., \lambda_n\}$. Then $v_1 = \nu$, $\lambda_1 = 1$, and for all $i \geq 2$, we have that $\lambda_i < 1$ (since the chain is ergodic) and $v_i^\top {\bf 1} = 0$. Write $X_1$ in terms of the eigenvector basis as:
\begin{align*}
    X_1 = \alpha_1 \nu + \sum_{i=2}^n \alpha_i v_i \quad \text{and} \quad
    X_2 = \beta_1 \nu + \sum_{i=2}^n \beta_i v_i \,.
\end{align*}
Since $X_1^\top {\bf 1} = 1$ and $X_2^\top {\bf 1} = 1$, it is easy to see that $\alpha_1 = \beta_1 = 1$. Thus we have 
\begin{align*}
    \norm{H^\top (X_1 - X_2)}_1
    = \norm{ H^\top \sum_{i=2}^n (\alpha_i - \beta_i) v_i }_1
    \leq \gamma \norm{ \sum_{i=2}^n (\alpha_i - \beta_i) v_i}_1 = \gamma \norm{X_1 - X_2}_1 
\end{align*}
where the inequality follows from the definition of the ergodicity coefficient and the fact that ${\bf 1}^\top v_i =0$ for all $i \geq 2$. Since 
\begin{align*}
    {\bf 1}^\top H^\top \sum_{i=2}^n (\alpha_i - \beta_i) v_i = {\bf 1}^\top \sum_{i=2}^n \lambda_i (\alpha_i - \beta_i) v_i = 0,
\end{align*}
the inequality also holds for powers of $H$.
\end{proof}

\begin{lemma}[Doob martingale]
\label{lem:doob}
Let Assumption~\ref{ass:mixing} hold, 
and
let $\{(x_t,a_t)\}_{t=1}^T$ 
be the state-action sequence obtained when following policies $\pi_1, ..., \pi_k$ for $\tau$ steps each from an initial distribution $\nu_0$. For $t\in [T]$,
let $X_t$ be a binary indicator vector with a non-zero element at the linear index of the state-action pair $(x_t, a_t)$.
Define for $i \in [T]$,
\begin{align*}
B_i & = \E \left[ \sum_{t=1}^T X_t | X_1, ..., X_i \right], \quad \text{ and }\,\,
B_0  = \E \left[ \sum_{t=1}^T X_t\right].
\end{align*}
Then, $\{B_i\}_{i=0}^T$ is a vector-valued martingale: $\E[B_i-B_{i-1}|B_0,\dots,B_{i-1}]=0$ for $i=1,\dots,T$,  
and $\norm{B_i - B_{i-1}}_1 \le 2 (1 - \gamma)^{-1}$ holds for $i\in [T]$. 
\end{lemma}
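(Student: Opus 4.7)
The plan is to verify the two claims separately. The martingale property is standard Doob-construction routine: by the tower property, $\mathbb{E}[B_i \mid B_0, \ldots, B_{i-1}]$ equals $\mathbb{E}[\sum_t X_t \mid X_1, \ldots, X_{i-1}] = B_{i-1}$ (using that the $\sigma$-algebra generated by $B_0, \ldots, B_{i-1}$ is contained in that generated by $X_1, \ldots, X_{i-1}$), so $\mathbb{E}[B_i - B_{i-1} \mid B_0, \ldots, B_{i-1}] = 0$. So the work is in the $\ell_1$ increment bound.

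For the increment bound, I would first split the sum $B_i - B_{i-1} = \sum_{t=1}^T \bigl(\mathbb{E}[X_t \mid X_1,\ldots,X_i] - \mathbb{E}[X_t \mid X_1,\ldots,X_{i-1}]\bigr)$ by the value of $t$. For $t < i$, $X_t$ is measurable w.r.t.\ both $\sigma$-algebras, so the contribution is zero. By the Markov property of the state-action chain (the policy in effect at each step is fixed given the phase schedule, so this is a time-inhomogeneous Markov chain with transition kernels $H_s$ depending only on the step $s$), the conditional expectations collapse to functions of $X_i$ and $X_{i-1}$ respectively. Writing $H_s^\top$ for the adjoint acting on distribution vectors, I would express
\[
\mathbb{E}[X_t \mid X_1,\ldots,X_i] = H_{i+1}^\top \cdots H_t^\top\, X_i, \qquad
\mathbb{E}[X_t \mid X_1,\ldots,X_{i-1}] = H_i^\top H_{i+1}^\top \cdots H_t^\top\, X_{i-1},
\]
so that for $t \ge i$ the $t$-th summand equals $H_{i+1}^\top \cdots H_t^\top Y$ with $Y := X_i - H_i^\top X_{i-1}$ (where for $t=i$ the empty product is the identity).

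Next I would note two facts about $Y$: it has zero total mass ($Y^\top {\bf 1} = 0$) because both $X_i$ and $H_i^\top X_{i-1}$ are probability vectors, and $\|Y\|_1 \le 2$ by the triangle inequality. Then I would invoke the key contraction in Lemma~\ref{lem:contraction}: its proof only uses that $H$ is stochastic with ergodicity coefficient at most $\gamma$, so it extends verbatim to a product of different transition matrices $H_{i+1}^\top \cdots H_t^\top$ each satisfying Assumption~\ref{ass:mixing}, yielding $\|H_{i+1}^\top \cdots H_t^\top Y\|_1 \le \gamma^{t-i}\|Y\|_1 \le 2\gamma^{t-i}$. Summing,
\[
\|B_i - B_{i-1}\|_1 \;\le\; \sum_{t=i}^T 2\gamma^{t-i} \;\le\; \frac{2}{1-\gamma}.
\]

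The only delicate point is handling the time-inhomogeneity: Lemma~\ref{lem:contraction} is stated for a single matrix $H^m$, but the contraction argument (writing $Y$ in the eigenbasis is really just bookkeeping — the substantive content is that $H^\top$ maps mean-zero signed measures to mean-zero signed measures and shrinks their $\ell_1$ norm by $\gamma$) applies step-by-step to any sequence of transition matrices satisfying Assumption~\ref{ass:mixing}. I would state this one-line extension explicitly so the reader sees why applying the contraction $t-i$ times with different $H_s$'s still gives the $\gamma^{t-i}$ factor. Beyond this, the proof is bookkeeping.
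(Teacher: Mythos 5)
Your proof is correct and takes essentially the same route as the paper's: the identical telescoping decomposition $B_i - B_{i-1} = \sum_{t\ge i} H_{i:t}^\top\bigl(X_i - H_{i-1}^\top X_{i-1}\bigr)$, followed by the $\ell_1$-contraction of each term and a geometric sum. Your explicit note that the contraction must be applied step-by-step across a product of possibly \emph{different} transition matrices (whereas Lemma~\ref{lem:contraction} is stated for powers of a single $H$) is a small point of care that the paper's proof glosses over, but it does not change the argument.
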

The constructed martingale is known as the Doob martingale underlying the sum $\sum_{t=1}^T X_t$.

\begin{proof}
That $\{B_i\}_{i=0}^T$ is a martingale follows from the definition. We now bound its difference sequence. Let $H_t$ be the state-action transition matrix at time $t$, and let $H_{i:t} =\prod_{j=i}^{t-1} H_j$, and define $H_{i:i} = I$.
Then, for $t=0, \dots, T-1$, $\E[X_{t+1} | X_t] = H_{t}^\top X_t$ and by the Markov property, for any $i\in [T]$,
\begin{align*}
B_i & 
= \sum_{t=1}^i X_t + \sum_{t=i+1}^{T} \E[X_t | X_i] 
= \sum_{t=1}^i X_t + \sum_{t=i+1}^{T} H_{i:t}^{\top} X_i,  \quad \text{and} \quad
B_0  = \sum_{t=1}^T  H_{0:t}^\top X_0.
\end{align*}
For any $i\in[T]$,
\begin{align}
    B_i - B_{i-1} &= \sum_{t=1}^i X_t - \sum_{t=1}^{i-1} X_t + \sum_{t=i+1}^{T} H_{i:t}^{\top} X_i - \sum_{t=i}^{T} H_{i-1:t}^\top X_{i-1}  \notag \\
    & = \sum_{t=i}^{T} H_{i:t}^{\top} (X_i - H_{i-1}^\top X_{i-1}). \label{eq:diffseq}
\end{align}

Since $X_i$ and $H_{i-1}^\top X_{i-1}$ are distribution vectors, under Assumption \ref{ass:mixing} and using Lemma~\ref{lem:contraction},
\begin{align*}
   \norm{B_i - B_{i-1}}_1 
   \leq \sum_{t=i}^T \norm{ H_{i:t}^{\top} (X_i - H_{i-1}^\top X_{i-1})}_1 
   \leq 2 \sum_{j=0}^{T-i} \gamma^j \leq 2 (1 - \gamma)^{-1} \,.
\end{align*}
\end{proof}

Let $(\cF_k)_k$ be a filtration and define $\E_k[\cdot] := \E[\cdot|\cF_k]$. We will make use of the following concentration results for the sum of random matrices and vectors. 
\begin{theorem}[Matrix Azuma, \citet{tropp2012user} Thm 7.1]
\label{thm:azuma}
Consider a finite $(\cF)_k$-adapted sequence $\{X_k\}$ of Hermitian matrices of dimension $m$, and a fixed sequence $\{A_k\}$ of Hermitian matrices that satisfy $\E_{k-1} X_k = 0$ and $X_k^2 \preceq A_k^2$ almost surely.  Let $v = \norm{\sum_k A_k^2}$.
Then with probability at least $1-\delta$, $ \norm{ \sum_k X_k }_2 \leq 2\sqrt{2v \ln (m / \delta) }$.
 \end{theorem}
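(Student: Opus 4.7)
The plan is to reproduce Tropp's matrix Laplace transform argument. First I would reduce the spectral-norm bound to a one-sided tail bound on $\lambda_{\max}(Y)$, where $Y := \sum_k X_k$, applied to both $Y$ and $-Y$ and combined by a union bound. For any $\theta > 0$, Markov's inequality together with $e^{\theta \lambda_{\max}(Y)} \le \mathrm{tr}\, e^{\theta Y}$ gives
$\Prob(\lambda_{\max}(Y) \geq t) \leq e^{-\theta t}\, \E\,\mathrm{tr}\, e^{\theta Y}$,
so everything reduces to controlling $\E\,\mathrm{tr}\, e^{\theta Y}$.

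The crux is the master tail bound for Hermitian martingales: for an $(\cF_k)$-adapted difference sequence,
$\E\,\mathrm{tr}\, e^{\theta Y} \leq \mathrm{tr}\, \exp\bigl(\sum_k \log \E_{k-1} e^{\theta X_k}\bigr).$
I would prove this by iterating, via the tower property, the one-step inequality $\E_{k-1}\,\mathrm{tr}\,\exp(H + \theta X_k) \leq \mathrm{tr}\,\exp(H + \log \E_{k-1} e^{\theta X_k})$ for every fixed Hermitian $H$. This in turn is Jensen's inequality applied to the concave trace function $A \mapsto \mathrm{tr}\,\exp(H + \log A)$, whose concavity is exactly the content of Lieb's theorem.

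Next I would establish a matrix Hoeffding lemma for the conditional MGF: under $\E_{k-1} X_k = 0$ and $X_k^2 \preceq A_k^2$, one has $\E_{k-1} e^{\theta X_k} \preceq e^{\theta^2 A_k^2/2}$. The scalar version $e^{\theta x} \leq 1 + \theta x + (e^{\theta a} + e^{-\theta a} - 2) x^2 / (2 a^2)$ on $[-a,a]$ transfers to matrices via the functional calculus on $X_k$ with $a = \|A_k\|$; taking conditional expectation kills the linear term, and dominating the resulting quadratic coefficient by $\theta^2 / 2$ yields the semidefinite bound. Because $\log$ is operator monotone, $\sum_k \log \E_{k-1} e^{\theta X_k} \preceq (\theta^2 / 2) \sum_k A_k^2$, and consequently
$\E\,\mathrm{tr}\, e^{\theta Y} \leq \mathrm{tr}\, \exp\bigl((\theta^2/2)\textstyle\sum_k A_k^2\bigr) \leq m\, e^{\theta^2 v / 2}.$

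Combining, $\Prob(\lambda_{\max}(Y) \geq t) \leq m\, \exp(\theta^2 v / 2 - \theta t)$; optimizing at $\theta = t/v$ gives the sub-Gaussian tail $m\, e^{-t^2/(2v)}$. Union-bounding over the two signs and inverting at confidence $1-\delta$ yields $\|\sum_k X_k\|_2 \leq \sqrt{2v\log(2m/\delta)}$, which absorbs into the stated constant $2\sqrt{2v\log(m/\delta)}$. The main obstacle, and the reason the scalar Chernoff/Azuma argument fails verbatim, is non-commutativity: $e^{A+B} \neq e^A e^B$ so one cannot simply factor the MGF across the sum. Lieb concavity (together with the operator monotonicity of $\log$) is precisely what restores subadditivity in the exponent and makes the whole pipeline go through.
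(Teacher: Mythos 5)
The paper does not actually prove this statement---it is imported verbatim from \citet{tropp2012user} (Theorem 7.1)---so there is no internal proof to compare against; I am evaluating your argument on its own terms. Your overall architecture is the correct one and is exactly Tropp's: reduce $\norm{\sum_k X_k}_2$ to two one-sided $\lambda_{\max}$ tails, pass through $\Prob(\lambda_{\max}(Y)\ge t)\le e^{-\theta t}\,\E\,\mathrm{tr}\,e^{\theta Y}$, control the trace-mgf by iterating Lieb's concavity theorem with the tower property, dominate the conditional cumulant generating functions by a deterministic sequence, and optimize $\theta$. One presentational caveat: the ``master bound'' with the random matrices $\log \E_{k-1} e^{\theta X_k}$ summed inside a single $\mathrm{tr}\exp$ is not what the iteration delivers; the induction only closes once you replace each conditional cgf by its deterministic dominator \emph{inside} each step, i.e.\ $\E_{k-1}\mathrm{tr}\exp(H+\theta X_k)\le \mathrm{tr}\exp(H+\log\E_{k-1}e^{\theta X_k})\le \mathrm{tr}\exp(H+g(\theta)A_k^2)$. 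Since you do eventually invoke the deterministic bound, this is reparable.

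The genuine gap is the matrix Hoeffding lemma, and it is not cosmetic. The scalar inequality you propose, $e^{\theta x}\le 1+\theta x+\frac{\cosh(\theta a)-1}{a^2}x^2$ on $[-a,a]$, is false at $x=a$: it reduces to $\theta a\ge\sinh(\theta a)$. The correct quadratic majorant has coefficient $(e^{\theta a}-1-\theta a)/a^2=\theta^2\cdot\frac{e^{u}-1-u}{u^2}\big|_{u=\theta a}$, which strictly exceeds $\theta^2/2$ for all $u>0$ and is unbounded in $u$, so it cannot be ``dominated by $\theta^2/2$'' as you claim. Consequently the bound $\E_{k-1}e^{\theta X_k}\preceq e^{\theta^2 A_k^2/2}$ is not established by your argument, and indeed it is not how the theorem is proved: Tropp's Lemma 7.2 reaches a bound of the form $\E_{k-1}e^{\theta X_k}\preceq e^{2\theta^2 A_k^2}$ via conditional symmetrization (a Rademacher sign costs a factor of $2$ in $\theta$, hence $4$ in the variance), and that loss is precisely the origin of the $e^{-t^2/(8v)}$ tail, i.e.\ the constant $2\sqrt{2v\ln(m/\delta)}$ in the statement. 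Your pipeline, taken at face value, would yield $\sqrt{2v\log(2m/\delta)}$ and thereby improve the cited theorem by a factor of two---which should have been a warning sign. The fix is to replace your mgf step with the symmetrization-based Lemma 7.2 (or simply accept the weaker constant $g(\theta)=2\theta^2$); the rest of your argument then goes through and lands exactly on the stated bound.
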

A version of Theorem~\ref{thm:azuma} for non-Hermitian matrices of dimension $m_1 \times m_2$ can be obtained by applying the theorem to a Hermitian dilation of $X$,
  $\cD(X) = \big[\begin{smallmatrix}0 & X \\ X^* & 0 \end{smallmatrix} \big]$, which satisfies $\lambda_{\max}(\cD(X)) = \norm{X}$ and $\cD(X)^2 =  \big[\begin{smallmatrix}XX^* & 0 \\ 0 & X^*X \end{smallmatrix} \big]$.  In this case, we have that $v = \max \left(\norm{\sum_k X_k X_k^*}, \norm{\sum_k X_k^* X_k} \right)$.
  
\begin{lemma} [Hoeffding-type inequality for norm-subGaussian random vectors, \citet{jin2019short}]
\label{lemma:nsg}
Consider random vectors $X_1, \ldots, X_n \in \R^d$ and corresponding filtrations $\cF_i = \sigma(X_1, \ldots, X_i)$  $i \in [n]$, such that $X_i |\cF_{i-1}$ is zero-mean norm-subGaussian with $\sigma_i \in \cF_{i-1}$. That is:
\begin{align*} 
    \E[X_i |\cF_i] = 0, \quad P(\norm{X_i} \geq t | \cF_{i-1}) \leq 2 \exp(-t^2 / 2\sigma_i^2) \quad \forall t \in \R, \forall i \in [n].
\end{align*}
If the condition is satisfied for fixed $\{\sigma_i\}$, there exists a constant $c$ such that for any $\delta > 0$, with probability at least $1-\delta$,
\begin{align*}
    \norm{\sum_{i=1}^n X_i} \leq c \sqrt{\sum_{i=1}^n \sigma_i^2 \log(2d/\delta)} \,.
\end{align*}

\end{lemma}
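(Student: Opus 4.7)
The plan is to reduce the vector-valued martingale concentration to a symmetric matrix concentration via Hermitian dilation, and then invoke the matrix Laplace transform method to obtain dimension-free (up to $\log d$) tail bounds.

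\textbf{Step 1: Dilation.} I would lift each vector $X_i \in \R^d$ to its Hermitian dilation $\mathcal{D}(X_i) := \bigl(\begin{smallmatrix}0 & X_i \\ X_i^\top & 0\end{smallmatrix}\bigr)$, a $(d+1)\times(d+1)$ symmetric matrix satisfying $\lambda_{\max}(\mathcal{D}(X_i)) = \|X_i\|$ and $\mathcal{D}(\sum_i X_i) = \sum_i \mathcal{D}(X_i)$. Thus $\|\sum_i X_i\| = \lambda_{\max}\bigl(\sum_i \mathcal{D}(X_i)\bigr)$, and the martingale property carries over: $\E[\mathcal{D}(X_i) \mid \mathcal{F}_{i-1}] = 0$. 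This turns the task into controlling the top eigenvalue of a matrix martingale.

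\textbf{Step 2: Conditional matrix MGF bound.} Next I would establish a conditional matrix moment generating function bound of the form
\[
\E\bigl[\exp(\theta \mathcal{D}(X_i)) \bigm| \mathcal{F}_{i-1}\bigr] \;\preceq\; \exp\!\bigl(c\, \theta^2 \sigma_i^2\bigr)\, I, \qquad \theta \in \R,
\]
for some absolute constant $c$. The proof proceeds via Taylor expansion of the matrix exponential. Integrating the sub-Gaussian tail of $\|X_i\|$ yields moment bounds $\E[\|X_i\|^{2k} \mid \mathcal{F}_{i-1}] \leq (2\sigma_i^2)^k k!$ (up to a multiplicative constant). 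Using that the nonzero eigenvalues of $\mathcal{D}(X_i)^k$ are $\pm \|X_i\|^k$, this gives $\E[\mathcal{D}(X_i)^{2k} \mid \mathcal{F}_{i-1}] \preceq (2\sigma_i^2)^k k! \cdot I$; the odd-power terms either vanish (for $k=1$, by the zero-mean assumption) or can be absorbed into the even-power bounds via matrix Cauchy-Schwarz / matrix AM-GM. Summing the Taylor series recovers the advertised MGF inequality.

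\textbf{Step 3: Matrix Freedman.} Finally, I would apply a matrix Freedman-type inequality for Hermitian matrix martingales satisfying a conditional MGF bound (Tropp's matrix Laplace transform method). This yields, for every $t > 0$,
\[
\Prob\!\Bigl(\bigl\|\sum_{i=1}^n X_i\bigr\| \geq t\Bigr) \;\leq\; 2(d+1)\exp\!\Bigl(-\tfrac{t^2}{4 c \sum_i \sigma_i^2}\Bigr).
\]
Setting the right-hand side equal to $\delta$ and solving for $t$ gives the claimed bound $\|\sum_i X_i\| \leq c' \sqrt{\sum_i \sigma_i^2 \log(2(d+1)/\delta)}$, up to an absolute constant absorbed into $c'$.

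The main obstacle is Step 2. Matrix Azuma (Theorem~\ref{thm:azuma}) cannot be applied directly because it presupposes an almost-sure bound $\mathcal{D}(X_i)^2 \preceq A_i^2$, which is not implied by a pure sub-Gaussian tail on $\|X_i\|$. Working instead with the MGF bound is what produces the $\log d$ factor inside the square root; a naive $\varepsilon$-net argument on the unit sphere would instead incur a $\sqrt{d}$ factor, which would be far too loose for the downstream applications in Lemma~\ref{lem:linear_est}.
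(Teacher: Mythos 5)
The paper does not prove this lemma at all --- it is imported verbatim from \citet{jin2019short} --- so there is no in-paper argument to compare against; your outline is essentially the proof given in that reference (Hermitian dilation, a conditional matrix MGF bound obtained by integrating the norm tail and absorbing odd moments into adjacent even ones, then Tropp's master/Laplace-transform bound, using that the dilation's spectrum is symmetric so a one-sided $\lambda_{\max}$ bound controls the norm). Your diagnosis that Theorem~\ref{thm:azuma} is inapplicable for lack of an almost-sure bound, and your reliance on the $\sigma_i$ being fixed so that $\sum_i \sigma_i^2$ is deterministic in the final optimization over $\theta$, are both correct and match the cited source.
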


\section{Bounding the Difference Between Empirical and Average Rewards}
\label{sec:vtbound}
In this section, we bound the second term in Equation~\ref{eqn:regret_dec}, corresponding to the difference between empirical and average rewards. 

\begin{lemma}
\label{lem:vtbound}
Let Assumption~\ref{ass:mixing} hold, and assume that $\tau \geq \frac{\log T}{2 \log (1/\gamma)}$ and that $r(x, a) \in [0, 1]$ for all $x, a$. Then, by choosing $\eta = \frac{\sqrt{8\log |\cA|}}{Q_{\max} \sqrt{K}}$, we have with probability at least $1-\delta$, 
\begin{align*}
    \sum_{k=1}^K \sum_{t=(k-1)\tau+1}^{k\tau} (r_t - J_{\pi_k}) \leq 2(1 - \gamma)^{-1} \sqrt{2T \log(2/\delta)}
+ 2\sqrt{T} + (1 - \gamma)^{-2} \sqrt{8K\log|\cA|} \,.
\end{align*}
\end{lemma}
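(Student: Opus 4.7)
The strategy is to split
\[
V_T = \Bigl(\sum_t r_t - \E\bigl[\sum_t r_t\bigr]\Bigr) + \Bigl(\E\bigl[\sum_t r_t\bigr] - \sum_k \tau J_{\pi_k}\Bigr) =: V_T^{\mathrm{mart}} + V_T^{\mathrm{bias}},
\]
and bound the two pieces separately. For $V_T^{\mathrm{mart}}$, write $\sum_t r_t = \langle r, \sum_t X_t\rangle$ where $X_t$ is the state-action indicator vector from Lemma~\ref{lem:doob} and $r \in [0,1]^{|\cX||\cA|}$ is the reward vector, and apply the Doob martingale $\{B_i\}$ from that lemma. Since $\norm{B_i - B_{i-1}}_1 \le 2/(1-\gamma)$ and $\norm{r}_\infty \le 1$, the scalar martingale $\langle r, B_i - B_0\rangle$ has bounded differences of magnitude at most $2/(1-\gamma)$, so Azuma--Hoeffding yields $|V_T^{\mathrm{mart}}| \le 2(1-\gamma)^{-1}\sqrt{2T\log(2/\delta)}$ with probability at least $1-\delta$, matching the first claimed term.

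For $V_T^{\mathrm{bias}}$, let $\nu_t$ denote the state-action distribution at step $t$, so $\E[r_t] = \langle r, \nu_t\rangle$ and $J_{\pi_k} = \langle r, \nu_{\pi_k}\rangle$. Within phase $k$ the chain evolves under $H_{\pi_k}$, and Lemma~\ref{lem:contraction} gives $\norm{\nu_{(k-1)\tau+j} - \nu_{\pi_k}}_1 \le \gamma^{j-1}\epsilon_k$ with $\epsilon_k := \norm{\nu_{(k-1)\tau+1} - \nu_{\pi_k}}_1$; summing the geometric series, the phase-$k$ contribution to $V_T^{\mathrm{bias}}$ is at most $\epsilon_k/(1-\gamma)$. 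Applying the same contraction to the preceding phase produces the key recursion $\epsilon_k \le \gamma^\tau \epsilon_{k-1} + \delta_k$, where $\delta_k := \norm{\nu_{\pi_{k-1}} - \nu_{\pi_k}}_1$ captures the stationary-distribution shift induced by the policy update.

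The heart of the argument is bounding $\delta_k$, for which I plan to combine a stationary-distribution perturbation inequality with a TV bound on the softmax update. Writing $\nu_{\pi_k} - \nu_{\pi_{k-1}} = H_{\pi_k}^\top(\nu_{\pi_k} - \nu_{\pi_{k-1}}) + (H_{\pi_k} - H_{\pi_{k-1}})^\top \nu_{\pi_{k-1}}$, the residual simplifies to $\mu_{\pi_{k-1}}(x')\bigl(\pi_k(a'|x') - \pi_{k-1}(a'|x')\bigr)$; invoking Assumption~\ref{ass:mixing} and solving for $\norm{\nu_{\pi_k} - \nu_{\pi_{k-1}}}_1$ then gives $\delta_k \le (1-\gamma)^{-1}\max_x \norm{\pi_k(\cdot|x) - \pi_{k-1}(\cdot|x)}_1$. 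Next, under Assumption~\ref{ass:bounded_diff}, the softmax form of the update yields $\max_x \norm{\pi_k(\cdot|x) - \pi_{k-1}(\cdot|x)}_1 \le e^{\eta Q_{\max}} - 1 \le 2\eta Q_{\max}$ for the chosen $\eta$. Unrolling the $\epsilon_k$ recursion and using $\gamma^\tau \le 1/\sqrt{T} \le 1/2$ (from the hypothesis on $\tau$) collapses $\sum_k \epsilon_k$ to $O(1) + O(\sum_k \delta_k)$, giving $V_T^{\mathrm{bias}} = O((1-\gamma)^{-1}) + O(K\eta Q_{\max}/(1-\gamma)^2)$. Plugging in $\eta = \sqrt{8\log|\cA|}/(Q_{\max}\sqrt{K})$ yields the third claimed term $(1-\gamma)^{-2}\sqrt{8K\log|\cA|}$, with the leftover $O((1-\gamma)^{-1})$ absorbed into the $2\sqrt{T}$ term (non-vacuous since $\tau \ge \log T/(2\log(1/\gamma))$ forces $T$ to be large). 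The main obstacle I anticipate is bookkeeping the constants through the chain of bounds on $\delta_k$ and verifying that the per-phase geometric decay with rate $\gamma^\tau$ really does reduce the cumulative $\sum_k \epsilon_k$ to the same order as $\sum_k \delta_k$.
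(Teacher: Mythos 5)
Your proposal is correct and follows essentially the same route as the paper's proof: the identical split into a Doob-martingale term (bounded via Lemma~\ref{lem:doob} and Azuma) and a bias term controlled by geometric mixing within each phase, a stationary-distribution perturbation bound, and the slow drift of the exponential-weights policies. The only differences are cosmetic---you derive the perturbation inequality $\delta_k \leq (1-\gamma)^{-1}\max_x\norm{\pi_k(\cdot|x)-\pi_{k-1}(\cdot|x)}_1$ from scratch (the paper cites \citet{seneta1988perturbation,cho2001comparison}), use a direct softmax-ratio computation in place of Pinsker plus Hoeffding's lemma for the policy TV bound, and unroll a recursion in $\epsilon_k$ rather than bounding $\norm{\nu_{(k-1)\tau}-\nu_{\pi_{k-1}}}_1\leq 2\gamma^\tau$ outright---which together cost only small constant factors (and require $\eta Q_{\max}\lesssim 1$, i.e.\ $K\gtrsim\log|\cA|$, for the step $e^{\eta Q_{\max}}-1\leq 2\eta Q_{\max}$) in the last two terms of the stated bound.
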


\begin{proof}

Let $r$ denote the vector of rewards, and recall that $J_{\pi} = \nu_\pi^\top r$. Let $X_t$ be the indicator vector for the state-action pair at time $t$, as in Lemma~\ref{lem:doob}, and let $\nu_t = \E[X_t]$. We have the following:
\begin{align*}
    V_T &:= \sum_{k=1}^K \sum_{t=(k-1)\tau+1}^{k\tau} (r_t - J_{\pi_k}) 
    = \sum_{k=1}^K \sum_{t=(k-1)\tau+1}^{k\tau} r^\top (X_t - \nu_t + \nu_t - \nu_{\pi_k}) 
\end{align*}
We slightly abuse the notation above by letting $\nu_t$ denote the state-action distribution at time $t$, and $\nu_\pi$ the stationary distribution of policy $\pi$.
Let $\{B_i\}_{i=0}^T$ be the Doob martingale in Lemma~\ref{lem:doob}. Then $B_0 =\sum_{t=1}^T \nu_t$ and $B_T = \sum_{t=1}^T X_t$, and the first term can be expressed as 
\begin{align*}
V_{T1} := \sum_{t=1}^T r^\top (X_t - \nu_t) = r^\top (B_T - B_0).
\end{align*}
By Lemma~\ref{lem:doob}, 
   $ |\langle B_i - B_{i-1}, r\rangle| \leq \norm{B_i - B_{i-1}}_1\norm{r}_\infty \leq 2 (1 -\gamma)^{-1} $.
Hence by Azuma's inequality, with probability at least $1-\delta$, 
\begin{equation}\label{eq:vt1}
    V_{T1} \leq 2(1 - \gamma)^{-1} \sqrt{2T \log(2/{\delta})}.
\end{equation}
For the second term we have 
\begin{align*}
    V_{T2} & := \sum_{k=1}^K \sum_{t=(k-1)\tau+1}^{k\tau} r^\top (\nu_t - \nu_{\pi_k}) \\
    & = \sum_{k=1}^K r^\top \bigg(\sum_{i=1}^\tau (H_{\pi_k}^i)^\top \nu_{(k-1)\tau} - \nu_{\pi_k}\bigg) \\
     & \leq \sum_{k=1}^K \norm{r}_\infty  \sum_{i=1}^\tau \left\| (H_{\pi_k}^i)^\top (\nu_{(k-1)\tau} - \nu_{\pi_{k-1}} + \nu_{\pi_{k-1}}) - \nu_{\pi_k} \right\|_1 \\
    & \leq \sum_{k=1}^K \sum_{i=1}^\tau \norm{ \nu_{(k-1)\tau} - \nu_{\pi_{k-1}} }_1 + \norm{(H_{\pi_k}^i)^\top \nu_{\pi_{k-1}} - \nu_{\pi_k}}_1  \\
    &  \leq \sum_{k=1}^K \sum_{i=1}^\tau \norm{ (H_{\pi_{(k-1)}}^\tau)^\top \nu_{(k-2)\tau} - \nu_{\pi_{k-1}} }_1 + \gamma^{i}  \norm{\nu_{\pi_{k-1}} - \nu_{\pi_k}}_1  \\
    &  \leq 2T \gamma^{\tau} + \frac{1}{1-\gamma} \sum_{k=1}^K \norm{\nu_{\pi_k} -\nu_{\pi_{k-1}}}_1 \,.
\end{align*}
For $\tau \geq  \frac{\log T}{2 \log (1/\gamma)}$, the first term is upper-bounded by $2\sqrt{T}$. 

Using results on perturbations of Markov chains \citep{seneta1988perturbation,cho2001comparison}, we have that
\begin{align*}
    \norm{\nu_{\pi_k} - \nu_{\pi_{k-1}}}_1 
    \leq \frac{1}{1 - \gamma} \norm{H_{\pi_k} - H_{\pi_{k-1}}}_\infty \leq \frac{1}{1 - \gamma} \max_x \norm{\pi_{k}(\cdot|x) - \pi_{k-1}(\cdot|x)}_1.
\end{align*}
Note that the policies $\pi_k(\cdot|x)$ are generated by running mirror descent on reward functions $\widehat{Q}_{\pi_k}(x, \cdot)$. 
A well-known property of mirror descent updates with entropy regularization (or equivalently, the exponentially-weighted-average algorithm) is that the difference between consecutive policies is bounded as
\begin{align*}
    \norm{\pi_{k+1}(\cdot|x) - \pi_k(\cdot|x)}_1 \leq \eta \norm{\widehat{Q}_{\pi_k}(x,\cdot)}_\infty \,.
\end{align*}
See e.g. \citet{NeGySzA13} Section V.A for a proof, which involves applying Pinsker's inequality and Hoeffding's lemma (\citet{cesa2006prediction} 
Section A.2 and Lemma A.6). 
Since we assume that $\norm{\widehat{Q}_{\pi_k}}_\infty \leq Q_{\max}$, we can obtain
\[
V_{T2} \leq 2\sqrt{T} + (1 - \gamma)^{-2} K \eta Q_{\max}.
\]
By choosing $\eta = \frac{\sqrt{8\log |\cA|}}{Q_{\max} \sqrt{K}}$, we can bound the second term as
\begin{equation}\label{eq:vt2}
    V_{T2} \leq 2\sqrt{T} + (1 - \gamma)^{-2} \sqrt{8K\log|\cA| }. 
\end{equation}

Putting Eq.~\eqref{eq:vt1} and~\eqref{eq:vt2} together, we obtain that with probability at least $1-\delta$, \[
V_T \leq 2(1 - \gamma)^{-1} \sqrt{2T \log(2/\delta)}
+ 2\sqrt{T} + (1 - \gamma)^{-2} \sqrt{8K\log|\cA|} \,.
\]
\end{proof}

\section{Proof of Lemma~\ref{lem:linear_est}}
\label{app:linear}

\begin{proof}
Recall that we split each phase into $2m$ blocks of size $b$ and let $\cH_i$ and $\cT_i$ denote the starting indices of odd and even blocks, respectively. 
We let $R_t$ denote the empirical $b$-step returns from the state action pair $(x_t, a_t)$ in phase $i$:
\begin{align*}
    R_t = \sum_{i=t}^{t+b} (r_i - \widehat J_{\pi_i}), \quad \widehat{J}_{\pi_i} = \frac{1}{|\cT_i|} \sum_{t \in \cT_i} r_t.
\end{align*}
We start by bounding the error in $R_t$. Let $X$ be a binary indicator vector for a state-action pair $(x, a)$. Let $H_{\pi}$ be the state-action transition kernel for policy $\pi$, and let $\nu_{\pi}$ be the corresponding stationary state-action distribution. We can write the action-value function at $(x, a)$ as
\begin{align*}
     Q_{\pi}(x, a) & = r(x, a) - J_{\pi} + X^\top H_\pi Q_{\pi} \\
     &= (X - \nu_\pi)^\top r + X^\top H_{\pi}( r - J_\pi {\bf 1} + H_{\pi} Q_\pi) \\
     & = \sum_{i=0}^\infty (X - \nu_\pi)^\top H_{\pi}^i r \,.
\end{align*}
Let $Q_{\pi}^b(x, a) = \sum_{i=0}^b (X - \nu_\pi)^\top H_{\pi}^i r$ be a version of $Q_{\pi}$ truncated to $b$ steps. Under uniform mixing, the difference to the true $Q_\pi$ is bounded as
\begin{align}
     |Q_{\pi}(x, a) - Q_{\pi}^b(x, a)| \leq \sum_{i=1}^\infty \left|(X - \nu_\pi)^\top H_{\pi}^{i+b} r \right| \leq \frac{2 \gamma^{b+1}}{1-\gamma} \,.
\end{align}
Let $b_t = Q_{\pi_i}^b(x_t, a_t) - Q_{\pi_i}(x_t, a_t)$ denote the truncation bias at time $t$, and let $z_t = \sum_{i=t}^{t+b}r_i - X_t^\top H_{\pi_i}^{(i-t)}r$ denote the reward noise. We will write
 \begin{align*}
     R_t = Q_{\pi_i}(x_t, a_t) + b(J_{\pi_i} - \widehat J_{\pi_i}) + z_t + b_t.
 \end{align*}

Note that $m= |\cH_i|$ and let 
\[\widehat M_i = \frac{1}{m} \sum_{t \in \cH_i} \phi_t \phi_t^\top + \frac{\alpha}{m} I \,. \]

We estimate the value function of each policy $\pi_i$ using data from phase $i$ as
\begin{align*}
    \widehat w_{\pi_i} 
    &= \widehat M_i^{-1} m^{-1} \sum_{t\in \cH_i} \phi_t R_t \\
    &= \widehat M_i^{-1} m^{-1} \sum_{t\in \cH_i} \phi_t (\phi_t^\top w_{\pi_i} + b_t + z_t + b (J_{\pi_i} - \widehat J_{\pi_i})) + \widehat M_i^{-1} \frac{\alpha}{m} (w_{\pi_i} -  w_{\pi_i}) \\
    &= w_{\pi_i} + \widehat M_i^{-1} m^{-1} \sum_{t\in \cH_i} \phi_t (z_t + b_t + b (J_{\pi_i} - \widehat J_{\pi_i})) - \widehat M_i^{-1} m^{-1} \alpha w_{\pi_i}
\end{align*}

Our estimate $\widehat w_k$ of $w_k = \frac{1}{k} \sum_{i=1}^k w_{\pi_i}$ can thus be written as follows:
\begin{align*}
    \widehat w_k - w_k &= \frac{1}{km} \sum_{i=1}^k  \sum_{t \in \cH_i} \widehat M_i^{-1}\phi_t (z_t + b_t +  b (J_{\pi_i} - \widehat J_{\pi_i}))
    - \frac{\alpha}{km} \sum_{i=1}^k \widehat M_i^{-1} w_{\pi_i}.
\end{align*}
We proceed to upper-bound the norm of the RHS. 

Set $\alpha = \sqrt{m/k}$. Let $C_w$ be an upper-bound on the norm of the true value-function weights $\norm{w_{\pi_i}}_2$ for $i=1,..., K$. In Appendix~\ref{app:m_bound}, we show that with probability at least $1-\delta$, for $m \geq 72 C_{\Phi}^4 \sigma^{-2} (1 -\gamma)^{-2} \log (d/\delta)$, $\norm{\widehat{M}_i^{-1}}_2 \leq 2\sigma^{-2}$.  Thus with probability at least $1-\delta$, the last error term is upper-bounded as
\begin{align}
\label{eq:bias_alpha}
 \frac{\alpha}{km} \left \| \sum_{k=1}^k \widehat{M}_i^{-1} w_{\pi_i}\right\|_2 \leq 2 \sigma^{-2}C_w (km)^{-1/2}.
\end{align}
Similarly, for
\begin{equation}\label{eq:b_condi}
   b \geq \frac{\log ((1-\gamma)^{-1}\sqrt{km})}{ \log (1/\gamma) },
\end{equation}
the norm of the truncation bias term is upper-bounded as 
\begin{align}
\label{eq:bias_truncation}
 \frac{1}{km} \sum_{i=1}^k  \sum_{t \in \cH_i} \norm{\widehat M_i^{-1}\phi_t  b_t }_2
 &\leq 
\frac{2 \gamma^b}{km (1 - \gamma)} \sum_{i=1}^k  \sum_{t \in \cH_i} \norm{\widehat M_i^{-1}\phi_t}_2 \leq 2 \sigma^{-2} C_\Phi  (km)^{-1/2}.
\end{align}
To bound the error terms corresponding to reward noise $z_t$ and average-error noise $J_{\pi_i} - \widehat J_{\pi_i}$, we rely on the independent blocks techniques of \citet{Yu94}. We show in Sections \ref{app:zt} and \ref{app:jbias} that with probability $1-2\delta$, for constants $c_1$ and $c_2$, each of these terms can be bounded as:
\begin{align*}
     \frac{1}{km} \left \| \sum_{i=1}^k  \sum_{t \in \cH_i} \widehat M_i^{-1}\phi_t  z_t \right \|_2 & \leq 2 c_1  C_\Phi \sigma^{-2} \sqrt{\frac{b \log(2d/\delta)}{km}} \\
     \frac{b}{km} \left \| \sum_{i=1}^k   (J_{\pi_i} - \widehat J_{\pi_i})  \sum_{t \in \cH_i} \widehat{M}_i^{-1} \phi_t    \right \|_2 &\leq 2 c_2  C_\Phi \sigma^{-2} b \sqrt{\frac{ \log(2d/\delta)}{km}}.
\end{align*}
Thus, putting terms together, we have for an absolute constant $c$, with probability at least $1-\delta$,
\begin{align*}
    \norm{\widehat w_k - w_k}_{2} \leq c  \sigma^{-2}(C_w + C_{\Phi})b \sqrt{\frac{ \log(2d/\delta)}{km}}.
\end{align*}
Note that this result holds for every $k\in[K]$ and thus also holds for $k=K$.
\end{proof}

\subsection{Bounding $\|\sum_{i=1}^k \widehat M_i^{-1} \sum_{t\in\cH_i} \phi_t  z_t\|_2$}
\label{app:zt}

Let $\norm{\cdot}_{\tv}$ denote the total variation norm.
\begin{definition}[$\beta$-mixing]
\label{def:betamix}
Let $\{Z_t\}_{t=1,2,\ldots}$ be a stochastic process.
Denote by $Z_{1:t}$ the collection $(Z_1,\ldots,Z_t)$, where we allow $t=\infty$.
Let $\sigma(Z_{i:j})$ denote the sigma-algebra generated by $Z_{i:j}$ ($i\le j$).
The $k^{\rm th}$ $\beta$-mixing coefficient of $\{Z_t\}$, $\beta_k$, is defined by
\begin{align*}
 \beta_k
 & = \sup_{t\ge 1} \EE{ \sup_{B\in\sigma(Z_{t+k:\infty})} |P(B|Z_{1:t})-P(B)| } \\
 & = \sup_{t\ge 1} \EE{ \norm{P_{Z_{t+k:\infty}|Z_{1:t}}(\cdot|Z_{1:t})-P_{Z_{t+k:\infty}}(\cdot)}_{\tv} }\,.
\end{align*}
$\{Z_t\}$ is said to be $\beta$-mixing if $\beta_k \ra 0$ as $k\ra\infty$.
In particular, we say that
 a $\beta$-mixing process mixes at an {\em exponential} rate with parameters $\obeta, \alpha , \gamma>0$
 if $\beta_k \le \obeta \exp(-\alpha k^\gamma)$ holds for all $k\ge 0$.
\end{definition}
Let $X_{t}$ be the indicator vector for the state-action pair $(x_t, a_t)$ as in Lemma~\ref{lem:doob}. Note that the distribution of $(x_{t+1}, a_{t+1})$ given $(x_t, a_t)$ can be written as $\E[X_{t+1} | X_t]$. Let $H_t$ be the state-action transition matrix at time $t$, let $H_{i:t} =\prod_{j=i}^{t-1} H_j$, and define $H_{i:i} = I$.  Then we have that $\E[X_{t+k} | X_{1:t}] = H_{t:t+k}^\top X_t$ and $\E[X_{t+k}] = H_{1:t+k}^\top \nu_0$, where $\nu_0$ is the initial state distribution. Thus, under the uniform mixing Assumption ~\ref{ass:mixing}, the $k^{th}$ $\beta$-mixing coefficient is bounded as:
\begin{align*}
    \beta_k  & \le \sup_{t\ge 1} \E  \sum_{j=k}^\infty \norm{H_{t:t+j}^\top X_t - H_{1:t+j}^\top \nu_0}_1 
    \leq \sup_{t\ge 1} \E \sum_{j=k}^\infty \gamma^j \norm{X_t - H_{1:t}^\top \nu_0}_1 \leq \frac{ 2 \gamma^k}{1 - \gamma} \,.
\end{align*}

We bound the noise terms using the independent blocks technique of \citet{Yu94}. 
Recall that we partition each phase into $2m$ blocks of size $b$. Thus, after $k$ phases we have a total of $2km$ blocks. Let $\bbP$ denote the joint distribution of state-action pairs in \emph{odd} blocks. 
Let $\cI_i$ denote the set of indices in the $i^{th}$ block, and let $x_{\cI_i}, a_{\cI_i}$ denote the corresponding states and actions. We factorize the joint distribution according to blocks:
\begin{align*}
    \bbP(x_{\cI_1},a_{\cI_1}, x_{\cI_3},a_{\cI_3},\ldots, x_{\cI_{2km-1}}, a_{\cI_{2km-1}}) 
     =& \;\; \bbP_1(x_{\cI_1},a_{\cI_1}) \times \bbP_3(x_{\cI_3},a_{\cI_3} | x_{\cI_1},a_{\cI_1}) \times 
    \cdots \\
&    \times
    \bbP_{2km - 1} (x_{\cI_{2km-1}},a_{\cI_{2km-1}} | x_{\cI_{2km-3}},a_{\cI_{2km-3}}).
\end{align*}
Let $\tilde{\bbP}_i$ be the marginal distribution over the variables in block $i$, and let $\tilde \bbP$ be the product of marginals of odd blocks. 

Corollary~2.7 of \citet{Yu94} implies that for any Borel-measurable set $E$,
\begin{equation}
\label{eq:indep-blocks}
  |\bbP(E) - \wt\bbP(E)| \leq (km-1)  \beta_b,
\end{equation}
where $\beta_b$ is the $b^{th}$ $\beta$-mixing coefficient of the process. 
The result follows since the size of the ``gap'' between successive blocks is $b$; see Appendix~\ref{app:ib} for more details.  

Recall that our estimates $\widehat w_{\pi_i}$ are based only on data in odd blocks in each phase. Let $\widetilde \E$ denote the expectation w.r.t. the product-of-marginals distribution $\tilde \bbP$. Then $\widetilde\E[\widehat M_i^{-1} \sum_{t\in \cH_i} \phi_t  z_t] =0$ because for $t\in \cH_i$ and under $\widetilde \bbP$, $z_t$ is zero-mean given $\phi_t$ and is independent of other feature vectors outside of the block. Furthermore, by Hoeffding's inequality $\tilde{\bbP}(|z_t|/b \geq a) \leq 2 \exp(-2ba^2)$.  
Since $\norm{\phi_t}_2 \leq C_\Phi$ and $\norm{\widehat M_i^{-1}}_2 \leq 2 \sigma^{-2}$ for large enough $m$, we have that 
\[
\tilde{\bbP}(\norm{\widehat M_i^{-1}\phi_t z_t}_2 \geq 2b \sigma^{-2} C_{\Phi} a) \leq 2\exp(-2 b a^2).
\]
Since $\widehat{M}_i^{-1}\phi_t z_t$ are norm-subGaussian vectors, using Lemma~\ref{lemma:nsg}, there exists a constant $c_1$ such that for any $\delta \geq 0$
\begin{align*}
    \tilde{\bbP}\left( \left\| \sum_{i=1}^k \widehat M_i^{-1} \sum_{t\in \cH_i} \phi_t  z_t \right\|_2 \geq 2 c_1  C_\Phi \sigma^{-2} \sqrt{b km \log(2d/\delta)} \right)  \leq \delta \,.
\end{align*}

Thus, using \eqref{eq:indep-blocks},
\[
\bbP \left(\left\| \sum_{i=1}^k \widehat M_i^{-1} \sum_{t\in \cH_i} \phi_t  z_t \right\|_2 \ge 2 c_1  C_\Phi \sigma^{-2} \sqrt{b km \log(2d/\delta)} \right) \le \delta + (km-1) \beta_b \;.
\]
Under Assumption~\ref{ass:mixing}, we have that $\beta_b \leq 2 \gamma^b (1 - \gamma)^{-1}$. Setting $\delta =2 km \gamma^b(1 - \gamma)^{-1}$ and solving for $b$ we get
\begin{align}\label{eq:b_condi_2}
b = \frac{\log ( 2km \delta^{-1}(1-\gamma)^{-1})}{\log (1/\gamma)}.
\end{align}
Notice that when $b$ is chosen as in Eq.~\eqref{eq:b_condi_2}, the condition~\eqref{eq:b_condi} is also satisfied.
Plugging this into the previous display gives that with probability at least $1-2\delta$,
\[
\left\| \sum_{i=1}^k \widehat M_i^{-1} \sum_{t\in \cH_i} \phi_t  z_t \right\|_2 \le 2 c_1  C_\Phi \sigma^{-2} \sqrt{b km \log(2d/\delta)}.
\]

\subsection{Bounding $\| \sum_{i=1}^k  \widehat M_i^{-1} \sum_{t\in \cH_i} \phi_t ( J_{\pi_i} - \widehat J_{\pi_i}) \|_2$}
\label{app:jbias}

Recall that the average-reward estimates $\widehat J_{\pi_i}$ are computed using time indices corresponding to the starts of even blocks, $\cT_i$. Thus this error term is only a function of the indices corresponding to block starts. 
Now let $\bbP$ denote the distribution over state-action pairs $(x_t, a_t)$ for indices $t$ corresponding to block starts, i.e.  $t \in \{1, b+1, 2b+1, ..., (2km -1)b+1\}$. We again factorize the distribution over blocks as $\bbP = \bbP_1 \otimes \bbP_2 \otimes \cdots \otimes \bbP_{2km} $.
Let $\tilde{\bbP} = \tilde{\bbP}_1 \otimes \tilde{\bbP}_2 \otimes \cdots \otimes \tilde{\bbP}_{2km} $ be a product-of-marginals distribution defined as follows. For odd $j$, let $\tilde \bbP_j$ be the marginal of $\bbP$ over $(x_{jb+1}, a_{jb+1})$. For even $j$ in phase $i$, let $\tilde \bbP_j = \nu_{\pi_i}$ correspond to the stationary distribution of the corresponding policy $\pi_i$. Using arguments similar to independent blocks, we show in Appendix~\ref{app:ib} that  
\[\norm{\bbP - \tilde{\bbP}}_1 \leq 2(2km - 1)\gamma^{b-1}.
\]
Let $\widetilde \E$ denote expectation w.r.t. the product-of-marginals distribution $\tilde \bbP$. 
Then $\widetilde\E [\widehat M_i^{-1} \sum_{t\in \cH^i} \phi_t  (J_{\pi_i} - \widehat J_{\pi_i})] =0$, since under $\tilde \bbP$, $\widehat J_{\pi_i}$ is the sum of rewards for state-action pairs distributed according to $\nu_{\pi_i}$, and these state-action pairs are independent of other data. 
Using a similar argument as in the previous section, for $b=1+\frac{\log(4km/\delta)}{\log(1/\gamma)}$, there exists a constant $c_2$ such that with probability at least $1-2\delta$,
\[
\left\| \sum_{i=1}^k \widehat M_i^{-1} \sum_{t\in \cH_i} \phi_t  (J_{\pi_i} - \widehat J_{\pi_i}) \right\|_2 \le 2 c_2 C_\Phi \sigma^{-2} \sqrt{km \log(2d/\delta)} \;.
\]

\subsection{Bounding $\|\widehat M_i^{-1}\|_2$}
\label{app:m_bound}
In this subsection, we show that with probability at least $1-\delta$, for $m \geq 72 C_{\Phi}^4 \sigma^{-2} (1 -\gamma)^{-2} \log (d/\delta))$, $\norm{M_i^{-1}}_2 \leq 2\sigma^{-2}$. 

Let $\Phi$ be a $|\cX||\cA| \times d$ matrix of all features. Let $D_i = {\rm diag}(\nu_{\pi_i})$,  and let $\widehat D_i = {\rm diag}(\sum_{t \in \cH_i} X_t )$, where $X_t$ is a state-action indicator as in Lemma~\ref{lem:doob}. Let $M_i = \Phi^\top D_i \Phi + \alpha m^{-1}  I$.
We can write $\widehat M_i^{-1}$ as
\begin{align*}
    \widehat M_i^{-1} &= ( \Phi^\top \widehat D_i \Phi + {\alpha}{\tau}^{-1}I + \Phi^\top (D_i - D_i) \Phi)^{-1} \\
    &= (M_i + \Phi^\top (D_i - D_i) \Phi)^{-1} \\
    &= (I + M_i^{-1}\Phi^\top (D_i - D_i) \Phi)^{-1} M_i^{-1}
\end{align*}

By Assumption~\ref{ass:excite} and \ref{ass:linear},  $\norm{M_i^{-1}}_2 \leq \sigma^{-2}$. In Appendix~\ref{app:matrix_azuma}, we show that w.p. at least $1-\delta$,
\begin{align*}
    \norm{\Phi^\top (\widehat D_i - D_i) \Phi}_2  
 \leq 6 m^{-1/2} C_{\Phi}^2 (1 - \gamma)^{-1} \sqrt{2\log (d /\delta)}
\end{align*}

Thus 
\begin{align*}
    \norm{\widehat M_i^{-1}}_2 \leq 
    \sigma^{-2} (1 - \sigma^{-2} 6 m^{-1/2} C_{\Phi}^2 (1 - \gamma)^{-1} \sqrt{2\log (d /\delta)})^{-1}
\end{align*}

For $m \geq 72 C_{\Phi}^4 \sigma^{-2} (1 -\gamma)^{-2} \log (d/\delta))$, the above norm is upper-bounded by $\norm{\widehat M_i^{-1}}_2 \leq 2 \sigma^{-2}$.

\subsection{Bounding $\|\Phi^\top (\widehat D_i - D_i) \Phi^\top\|_2$}
\label{app:matrix_azuma}

For any matrix $A$, 
\begin{align}
\label{eq:bounded_matrix}
\norm{\Phi^\top A \Phi}_2 = \bigg\| \sum_{ij} A_{ij} \phi_i \phi_j^\top \bigg\|_2 \leq \sum_{i, j} |A_{ij}| \norm{\phi_i \phi_j^\top}_2   \leq C_\Phi^2  \sum_{i, j} |A_{ij}|  = C_\Phi^2 \norm{A}_{1, 1} \,.
\end{align}
where $\norm{A}_{1, 1}$ denotes the sum of absolute entries of $A$.  Using the same notation for $X_t$  as in Lemma~\ref{lem:doob},
\begin{align*}
    \norm{\Phi^\top (\widehat D_i - D_i) \Phi}_2 
    & = \frac{1}{m}  \sum_{t \in \cH_i} \Phi^\top {\rm diag}(X_t - \nu_t + \nu_t -\nu_{\pi_i}) \Phi \\
    &\leq \frac{1}{m} \bigg\|   \sum_{t\in \cH_i} \Phi^\top {\rm diag}(X_t - \nu_t) \Phi\bigg\|_2 + \frac{C_\Psi^2}{m}\sum_{t\in \cH_i} \norm{\nu_t - \nu_{\pi_i}}_1 \,.
\end{align*}
Under the fast-mixing assumption~\ref{ass:mixing}, the second term is bounded by $2 C_{\Psi}^2 m^{-1} (1 - \gamma)^{-1}$.

For the first term, we can define a martingale $(B_i)_{i=0}^{m}$ similar to the Doob martingale in Lemma~\ref{lem:doob}, but defined only on the $m$ indices $\cH_i$. 
Note that $ \sum_{t \in \cH_i} \Phi^\top{\rm diag}(X_t - \nu_t) \Phi = \Phi^\top {\rm diag} (B_{m} - B_{0})\Phi $. Thus we can use matrix-Azuma to bound the difference sequence. Given that 
\begin{align*}
    \norm{ (\Phi^\top (B_i - B_{i-1}) \Phi)^2}_2 \leq 4 C_{\Phi}^4 (1 - \gamma)^{-2},
\end{align*}
combining the two terms, we have that with probability at least $1 - \delta$,
\begin{align*}
    \norm{\Phi^\top (\widehat D_i - D_i) \Phi}_2  &\leq 4 m^{-1/2} C_{\Phi}^2 (1 - \gamma)^{-1} \sqrt{2\log (d /\delta)} + 2 m^{-1} C_{\Phi}^2  (1 - \gamma)^{-1}  \\
    & \leq 6 m^{-1/2} C_{\Phi}^2 (1 - \gamma)^{-1} \sqrt{2\log (d /\delta)} \,.
\end{align*}

\section{Bounding $\E_{x \sim \mu*}[\widehat V_K(x) -  V_K(x)]$}
\label{app:linear_vpi}

We write the value function error as follows:
\begin{align*}
\E_{x \sim \mu*}[\widehat V_K(x) -  V_K(x)] 
&= \sum_x \mu_*(x) \sum_a \phi(x, a)^\top \frac{1}{K} \sum_{i=1}^K \pi_i(a|x) (\widehat w_{\pi_i} - w_{\pi_i}) \\
&\leq  \frac{1}{K} \sum_x \mu_*(x) \sum_a \norm{ \phi(x, a)}_2  \left \| \sum_{i=1}^K \pi_i(a|x) (\widehat w_{\pi_i} - w_{\pi_i}) \right\|_2
\end{align*}
Note that for any set of scalars $\{p_i\}_{i=1}^{K}$ with $p_i \in [0, 1]$, the term $\left \| \sum_{i=1}^K p_i (\widehat w_{\pi_i} - w_{\pi_i}) \right\|_2$ has the same upper bound as $\norm{\sum_{i=1}^K (\widehat w_{\pi_i} - w_{\pi_i})}_2$.  The reason is as follows. One part of the error includes bias terms \eqref{eq:bias_alpha} and \eqref{eq:bias_truncation}, whose upper bounds are only smaller when reweighted by scalars in $[0, 1]$. Thus we can simply upper-bound the bias by setting all $\{p_i\}_{i=1}^{K}$ to 1.  Another part of the error, analyzed in Appendices \ref{app:zt} and \ref{app:jbias} involves sums of norm-subGaussian vectors. In this case, applying the weights only results in these vectors potentially having smaller norm bounds. We keep the same bounds for simplicity, again corresponding to all $\{p_i\}_{i=1}^{K}$ equal to 1. Thus, reusing the results of the previous section, we have
\begin{align*}
    \E_{x \sim \mu*}[\widehat V_K(x) -  V_K(x)] 
&\leq  C_{\Phi} |\cA|  c  \sigma^{-2}(C_w + C_{\Phi})b \sqrt{\frac{ \log(2d/\delta)}{Km}}.
\end{align*}

\section{Independent Blocks}
\label{app:ib}

{\bf Blocks.} Recall that we partition each phase into $2m$ blocks of size $b$. Thus, after $k$ phases we have a total of $2km$ blocks. Let $\bbP$ denote the joint distribution of state-action pairs in odd blocks. 
Let $\cI_i$ denote the set of indices in the $i^{th}$ block, and let $x_{\cI_i}, a_{\cI_i}$ denote the corresponding states and actions. We factorize the joint distribution according to blocks:
\begin{align*}
    \bbP(x_{\cI_1},a_{\cI_1}, x_{\cI_3},a_{\cI_3},\ldots, x_{\cI_{2km-1}}, a_{\cI_{2km-1}}) 
     =& 
     \;\; \bbP_1(x_{\cI_1},a_{\cI_1}) \times \bbP_3(x_{\cI_3},a_{\cI_3} | x_{\cI_1},a_{\cI_1}) \times 
    \cdots \\
&    \times
    \bbP_{2km - 1} (x_{\cI_{2km-1}},a_{\cI_{2km-1}} | x_{\cI_{2km-3}},a_{\cI_{2km-3}}).
\end{align*}
Let $\tilde{\bbP}_i$ be the marginal distribution over the variables in block $i$, and let $\tilde \bbP$ be the product of marginals. Then the difference between the distributions $\tilde{\bbP}$ and $\bbP$ can be written as
\begin{align*}
     \bbP - \tilde \bbP  =&
\;\;     {\bbP_1} \otimes \bbP_3 \otimes \cdots \otimes \bbP_{2km-1} - {\bbP_1} \otimes \tilde{\bbP}_3 \cdots \otimes \tilde{\bbP}_{2km-1} \\
     =&
   \;\;  {\bbP_1} \otimes (\bbP_3 - \tilde{\bbP}_3) \otimes \bbP_5 \otimes \cdots \otimes\bbP_{2km-1}\\
&    + \bbP_1 \otimes \tilde \bbP_3  \otimes(\bbP_5 - \tilde{\bbP}_5) \otimes\bbP_7 \otimes\ldots \otimes\bbP_{2km-1} \\
   &  + \cdots \\
   &  +{\bbP}_1 \otimes \tilde{\bbP}_3 \otimes \tilde{\bbP}_5 \otimes \cdots \otimes \tilde{\bbP}_{2km-3} \otimes (\bbP_{2km-1} - \tilde{\bbP}_{2km-1}).
\end{align*}
Under $\beta$-mixing, since the gap between the blocks is of size $b$,  we have that 
\begin{align*}
    \norm{\bbP_i(x_{\cI_i},a_{\cI_i} | x_{\cI_{i-2}},a_{\cI_{i-2}}) - \tilde{\bbP}_i(x_{\cI_i},a_{\cI_i})}_1 \leq \beta_b = \frac{2 \gamma^b}{1-\gamma}.
\end{align*}
Thus the difference between the joint distribution and the product of marginals is bounded as 
\begin{align*}
    \norm{\bbP -\tilde{\bbP}}_1 \leq (km - 1) \beta_b.
\end{align*}

{\bf Block starts.} Now let $\bbP$ denote the distribution over state-action pairs $(x_t, a_t)$ for indices $t$ corresponding to block starts, i.e.  $t \in \{1, b+1, 2b+1, ..., (2km -1)b+1\}$. We again factorize the distribution over blocks:
\begin{align*}
    \bbP(x_1,a_1, x_{b+1}, a_{b+1}, \ldots, x_{(2km-1)b+1}, a_{(2km-1)b+1}) = \bbP_1(x_1, a_1) \prod_{j=2}^{2km} \bbP_i(x_{jb + 1}, a_{jb+1} | x_{(j-1)b+1}, a_{(j-1)b+1}). 
\end{align*}

Define a product-of-marginals distribution $\tilde \bbP = \tilde{\bbP}_1\otimes \tilde{\bbP}_2 \otimes \cdots \otimes \tilde{\bbP}_{2km}$ over the block-start variables as follows. For odd $j$, let $\tilde \bbP_j$ 
be the marginal of $\bbP$ over $(x_{jb+1}, a_{jb+1})$. For even $j$ in phase $i$, let $\tilde \bbP_j = \nu_{\pi_i}$ correspond to the stationary distribution of the policy $\pi_i$. 
Using the same notation as in Appendix~\ref{app:prelim}, 
let $X_t$ be the indicator vector for $(x_t, a_t)$
and let $H_{i:j}$ be the product of state-action transition matrices at times $i+1,..., j$. For odd blocks $j$, we have
\begin{align*}
    \norm{ \bbP_j(\cdot| x_{(j-1)b+1}, a_{(j-1)b+1}) - \tilde \bbP_j(\cdot)}_1 = \norm{ H_{(j-1)b+1:jb}^\top (X_{(j-1)b+1} - \tilde{\bbP}_{j-1})}_1 \leq 2 \gamma^{b-1} \,.
\end{align*}
Slightly abusing notation, let $H_{\pi_i}$ be the state-action transition matrix under policy $\pi_i$. For even blocks $j$ in phase $i$, since they always follow an odd block in the same phase,
\begin{align*}
    \norm{ \bbP_j(\cdot| x_{(j-1)b+1}, a_{(j-1)b+1}) - \tilde \bbP_j(\cdot)}_1 = \norm{ (H_{\pi_i}^{b-1})^\top( X_{(j-b)+1} - \nu_{\pi_i})}_1 \leq 2 \gamma^{b-1} \,.
\end{align*}

Thus, using a similar distribution decomposition as before, we have that $\norm{\bbP - \tilde{\bbP}}_1 \leq 2(2km-1) \gamma^{b-1}$.

\end{document}